\newcommand{\overbar}[1]{\mkern 1.5mu\overline{\mkern-1.5mu#1\mkern-1.5mu}\mkern 1.5mu}
\newtheorem{thm}{Theorem}
\newtheorem{lemma}[thm]{Lemma}
\newtheorem{cor}[thm]{Corollary}
\newtheorem{prop}[thm]{Proposition}
\theoremstyle{definition}
\newtheorem{definition}[thm]{Definition}
\newtheorem{remark}[thm]{Remark}
\newtheorem{example}[thm]{Example}
\newcommand{\fun}[2]{#1\!\left(#2\right)}
\newcommand{\PP}[2]{\fun{P_{#1\!}}{#2}} 
\newcommand{\given}{\vert}
\newcommand{\size}[1]{|#1|}
\newcommand{\data}{d} 
\newcommand{\dataSpace}{\mathcal D} 
\newcommand{\dataSize}{\size{\dataSpace}}
\newcommand{\concept}{h} 
\newcommand{\conceptSpace}{\mathcal H}
\newcommand{\conceptSize}{\size{\conceptSpace}}
\newcommand{\CI}{\mathrm{CI}} 
\newcommand{\CR}{\mathrm{CR}}
\newcommand{\sumxi}{\sum_{i=1}^{\size{\dataSpace}}}
\newcommand{\sumhj}{\sum_{j=1}^{\size{\conceptSpace}}}
\newcommand{\LL}{\PP{L}{\concept\given d}}
\newcommand{\LLpri}{\PP{L_0}{\concept}}
\newcommand{\LLmar}{\PP{L}{d}}
\newcommand{\TT}{\PP{T}{d\given\concept}} 
\newcommand{\TTpri}{\PP{T_0}{d}}
\newcommand{\TTmar}{\PP{T}{\concept}}
\newcommand{\LLmat}{\mathbf{L}}
\newcommand{\TTmat}{\mathbf{T}}
\newcommand{\Likmat}{\mathbf{M}\,}
\newcommand{\M}{\mathbf{M}}
\newcommand{\A}{\mathcal{A}} 
\newcommand{\B}{\mathcal{B}}
\newcommand{\Pn}{\Phi_n}
\newcommand{\R}{\mathcal{R}}
\newcommand{\diag}{\mathrm{diag}}
\newcommand{\e}{\epsilon}
\newcommand{\crs}{\overset{cr}{\sim}}
\begin{document}

%

%

\twocolumn[

\aistatstitle{Generalizing the theory of cooperative inference}

\aistatsauthor{Pei Wang \And Pushpi Paranamana \And  Patrick Shafto}

\aistatsaddress{Department of Mathematics \& Computer Science, Rutgers University---Newark } ]

\begin{abstract}
Cooperative information sharing is important to theories of human learning and has potential implications for machine learning. 
Prior work derived conditions for achieving optimal Cooperative Inference given relatively restrictive assumptions. 
We demonstrate convergence for any discrete joint distribution, robustness through equivalence classes and stability under perturbation, and effectiveness by deriving bounds from structural properties of the original joint distribution. 
We provide geometric interpretations, connections to and implications for optimal transport and to importance sampling, and  conclude by outlining open questions and challenges to realizing the promise of Cooperative Inference. 
\end{abstract}

\section{Introduction}



Cooperative information sharing is fundamental to human learning and finds applications in machine learning. The core idea of cooperation in human inference stems from work by Grice (Gricean Maxims; Grice, 1975)\nocite{grice1975logic} in linguistic pragmatics. Recent work in the linguistics literature has formalized the Rational Speech Act model \citep{frank2012predicting,goodman2013knowledge,Kao2014,lassiter2017adjectival}, which builds on earlier models of cooperative information sharing \citep{shafto2008teaching,Shafto2014,shafto2012learning}.
Indeed, these phenomena are not limited to language. Related models have been proposed to explain infants' learning from parents \citep{tomasello2009we,csibra2009natural,bonawitz2011double,shafto2012learning,buchsbaum2011children,gweon2014sins,shneidman2016learning,Eaves2016c}, 
learning from cooperative teachers \citep{shafto2008teaching,Shafto2014}, and how people decide who to trust \citep{shafto2012epistemic,Eaves2016c}. 
This model is of interest for explaining human learning and communication, but it lacks overarching theory regarding when and why cooperation might facilitate learning and communication. Our paper is a step toward a general mathematical theory for this work. 

Cooperative information sharing is of recent interest in machine learning. Explainability of machine learning models has been formalized in the Cooperative Inference (CI) framework \citep{yang2017explainable,vongbayesian}. There have also been several recent papers on learning from demonstrations that leverage cooperation in similar models \citep{ho2016showing,ho2018effectively}. Finally, cooperative inverse reinforcement learning is explicitly centered around CI 
\citep{hadfield2016cooperative,fisac2017pragmatic}.

 
We make four contributions toward strengthening the overarching theory of Cooperative Inference. 
(a) 
Section~\ref{sec: rectangular} proves convergence of CI for any rectangular matrix, which ensures that CI applies to any discrete model. 
(b) Cross ratio equivalence analysis in Section~\ref{sec:CR_eq} shows that the space of possible joint distributions is reducible to only working with distributions that differ in cross ratio(s). This provides a natural geometric structure over possible machine learning models, which is a highly general and very interesting direction for future work detailed in Section~\ref{sec:connections}. 
(c) 
Section~\ref{sec:sensitivity analysis} proves stability under perturbation which ensures robustness of inference where agents' beliefs differ. This shows that CI has the possibility of being viable in practice. 
(d) 
Section~\ref{sec:bounds} provides general bounds on effectiveness that are derived from structural properties of the initial matrix, $\M$. 
Section~\ref{sec:connections} provides a geometric interpretation, and connects to optimal transport and importance sampling, and  Section~\ref{sec:conclusion} conlcudes.
\section{Overview and Background}
All matrices in this paper are understood to be real, non-negative and have no zero rows or zero columns.
Matrices are in uppercase and their elements are in the corresponding lowercase.

These matrices can be thought as joint distributions of models throughout. In more detail, let $\conceptSpace$ be a concept space and $\dataSpace$ be a data space. For a given matrix $\M$,
each column can be viewed as a concept in $\conceptSpace$ and each row 
can be viewed as a data in $\dataSpace$. Normalizing by dividing the sum of its entries, $\M$ can be turned into a conditional distribution over $\conceptSpace$ or $\dataSpace$.
 

In this paper, we study the cooperative communication between a teacher and a learner.
Here, cooperation means that the teacher's selection of data depends on what the learner is likely to infer and vice versa.
The idea of \textit{cooperative inference} was introduced in \citep{YangYGWVS18}. We now briefly review their work.
\begin{definition}\label{def:CI}
For a fixed \textit{concept space} $\conceptSpace$ and a \textit{data space} $\dataSpace$, 
let $\LLpri$ be the learner's prior of a \textit{concept} $\concept$ among $\conceptSpace$ and $\TTpri$ be the teacher's prior of selecting a \textit{data} $\data$ from $\dataSpace$. 
The teacher's posterior of selecting  $\data$ to convey $\concept$ is denoted by $\TT$ and the learner's posterior for $\concept$ given $\data$ is denoted by $\LL$. \textbf{Cooperative inference} is a system shown below:
\begin{subequations}
\begin{align}
\LL &= \frac{\TT \LLpri}{\LLmar},
\label{eq:L}\\
\TT &= \frac{\LL \TTpri}{\TTmar},
\label{eq:T}
\end{align}
\label{eq:LT}
\end{subequations}
\noindent where $\LLmar$ and $\TTmar$ are the normalizing constants.
\end{definition}

Assuming uniform prior, \citep{YangYGWVS18} showed that Equation~\eqref{eq:LT} can be solved using 
\textbf{Sinkhorn iteration}~(\textbf{SK} for short; \citep{Sinkhorn1967}). The solution (if it exists) depends only on the initial joint distribution matrix, $\Likmat_{\dataSize \times \conceptSize}$, which defines the consistency between data and concepts. 



\textbf{Sinkhorn iteration} is simply the repetition of row and column normalization of $\Likmat$.
Denote the matrices obtained at the $k^\mathrm{th}$ row and column iteration of \eqref{eq:LT} by $\LLmat^{k}$ and $\TTmat^{k}$, respectively. Let their limits (if they exist) be $\LLmat:=\lim_{k\to\infty}\LLmat^{k}$ and $\TTmat:=\lim_{k\to\infty}\TTmat^{k}$.


\begin{example}\label{eg:M}
Consider a joint distribution matrix 
$\tiny{\M=
\begin{blockarray}{ccc}
&h_1 & h_2   \\
\begin{block}{c (cc)}
 d_1 & 1 & 1 \\
 d_2 & 0 & 1 \\
\end{block}
\end{blockarray}}$, where $m_{ij}=1$ if $d_i$ is consistent with $h_j$ and $m_{ij}=0$ otherwise, for $i,j=1,2$. \
The SK iteration proceeds as the following:
row normalization of $\M$ outputs: $\tiny\LLmat^{1}= \begin{pmatrix}\frac{1}{2} & \frac{1}{2} \\ 0 & 1 \end{pmatrix}$,
column normalization of $\LLmat^{1}$ outputs: $\tiny\TTmat^{1}= \begin{pmatrix}1 & \frac{1}{3} \\ 0 & \frac{2}{3} \end{pmatrix}$.
Iteratively, $\tiny\LLmat^{k}= \begin{pmatrix}1-\frac{1}{2k} & \frac{1}{2k} \\ 0 & 1 \end{pmatrix}$,
$\tiny\TTmat^{k}= \begin{pmatrix}1 & \frac{1}{2k} \\ 0 & 1-\frac{1}{2k} \end{pmatrix}$, and the limits exist as $k\to \infty$: 
$\tiny\LLmat=\TTmat=\M^*=\begin{pmatrix}1 &0 \\ 0 & 1 \end{pmatrix}$. 

For this $\M$, a teacher and a leaner who reason \textit{independently} 
can not reliably convey $h_1$ using $\dataSpace$; $d_1$, the only data that is consistent with $h_1$ is also consistent with $h_2$.
However, a teacher and learner that assume \textit{cooperation} can perfectly convey $h_1$ using $d_1$; in the converged joint distribution $\M^*$, $d_1$ is consistent only with $h_1$.
Intuitively, a cooperative teacher will pick $d_2$ to teach $h_2$, because picking $d_1$ would cause confusion for the learner. Correspondingly, when receiving $d_1$, the cooperative leaner will reason that the teacher must intend to teach $h_1$, because otherwise he would pick $d_2$.  In fact, the teaching between the cooperative pair is optimal, $\CI(\M)=1$ (Definition~\ref{def:CIndex}).
\end{example}

The \textit{Cooperative index} quantifies the effectiveness of the cooperative communication.
It is the average probability that a concept in $\conceptSpace$ can be correctly inferred by a learner given the teacher's selection of data. 

\begin{definition}\label{def:CIndex}
Given $\Likmat$ and assuming that SK iteration of \eqref{eq:LT} converges to a pair of matrices $\LLmat=(l_{ij})$ and $\TTmat=(t_{ij})$, we define the \textbf{cooperative index} as 
\[ \CI(\Likmat) = \frac{1}{\conceptSize} \LLmat \odot \TTmat=\frac{1}{\conceptSize} \sumhj \sumxi l_{i,j} t_{i,j}.\]
\end{definition}
Here, $\LLmat \odot \TTmat$ means the inner product between $\LLmat $ and $ \TTmat$.
The definition implies that $\CI(\Likmat)$ is invariant under row and column permutations of $\M$. 

Next we define a few useful technical terms.
\begin{definition}\label{def:positive_diag}
Let $A=(a_{ij})$ be an $n\times n $ matrix and $S_n$ be the set of all permutations of $\{1, 2, \dots, n\}$.
For any $\sigma \in S_n$, the set of $n$-elements $\{a_{1\sigma(1)},\dots, a_{n,\sigma(n)} \}$ is called a \textbf{diagonal}
of $A$. If every $a_{k\sigma(k)}>0$, we say that the diagonal is \textbf{positive}. 
An element $a_{i_0j_0}$ of $A$ is called \textbf{on-diagonal} if it is contained in a positive diagonal, otherwise $a_{i_0j_0}$ is called \textbf{off-diagonal}. In particular, $A$ may have a positive off-diagonal element. 
We use $\overbar{A}$ to denote the matrix obtained from $A$ by setting all its off-diagonal elements into zeros.
If $A$ contains no positive off-diagonal element, i.e $A=\overbar{A}$,  $A$ is said to have \textbf{total support}.
\end{definition}

\citep{YangYGWVS18} focused on the case when the data set and the hypotheses set have the same size. They showed that $0 \leq \CI(\Likmat)\leq 1$ for any $\M$ (if $\CI(\M)$ exists). In particular, when $\M$ is a square matrix, they showed that Equation~(\ref{eq:LT}) has a solution if and only if $\M$ has at least one diagonal and $\CI(\M)$ is optimal if and only if $\M$ has exactly one positive diagonal.
\section{Convergence of rectangular matrices}\label{sec: rectangular}
It is typical that the sizes of a data set and a concept set are different. Therefore, considering only square models is too restrictive. We show that the solution of Equation~\eqref{eq:LT} can be obtained using SK iteration for any rectangular joint distribution $\M$. This implies that cooperative inference can be performed on any discrete model.


First, we study the format of the limit of SK iteration on rectangular matrices. 
It is proven in \citep{sinkhorn1967concerning} that the limit (if exists) of SK iteration on a \textbf{square} $\M$ is a single doubly stochastic matrix $\M^*$, i.e. $\LLmat=\TTmat=\M^*$. As the numbers of rows and columns are different in a \textbf{rectangular} $\M$, the limit of the SK iteration on $\M$ is a pair of distinct matrices $(\LLmat, \TTmat)$, where, $\LLmat$ is row normalized and $\TTmat$ is  column normalized. Such a pair is called \textit{stable} defined below.


\begin{definition}\label{def:partial_pattern}
The \textbf{pattern} of a matrix $A$ is the set of entries where $a_{ij} > 0.$
Matrix $B$ is said to have a \textbf{partial pattern} of $A$, denoted by $B\prec A$, if $a_{ij}=0 \implies b_{ij}=0$.
\end{definition}

\begin{definition}\label{def:sk_stable}
A pair of $u\times v$-matrices $(P,Q)$ is called \textbf{stable} if column normalization of $P$ equals $Q$ and row normalization of $Q$ equals $P$. A matrix is \textit{stable} if it is contained in a \textit{stable} pair.  
\end{definition}
\begin{remark}
If $(P,Q)$ is \textit{stable}, then $P$ and $Q$ are row and column normalized, respectively. \textit{$SK$ iteration} of $P$ (or $Q$) results a sequence alternating between $P$ and $Q$. Moreover, $P$ and $Q$ must have the same pattern.
\end{remark} 

As mentioned above, the limit of SK iteration is doubly stochastic for a square $\M$. 
The following proposition provides a similar analogy for the characteristics of the limit pair for rectangular $\M$. 

\begin{prop}\label{prop:stable_format} \footnote{All proofs are included in the supplemental materials.}
Suppose that $(P,Q)$ is a \textbf{stable} pair of $u\times v$-matrices. 
Then up to permutations, $P$ is a block-wise diagonal matrix 
of the form $P=\diag(B_1, \dots, B_k)$\footnote{The corresponding statement holds for $Q$ too.}, where 
each $B_i$ is row normalized and has a constant column sum denoted by $c_i$.
In particular, $c_i=u_i/v_i$, where $u_i\times v_i$ is the dimension of $B_i$, for $i\in \{1,\dots, k\}$. 
\end{prop}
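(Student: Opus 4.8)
The plan is to analyze the bipartite structure induced by the pattern of $(P,Q)$. Since $P$ and $Q$ share the same pattern (by the Remark), form the bipartite graph $G$ on row-vertices $\{1,\dots,u\}$ and column-vertices $\{1,\dots,v\}$ with an edge $(i,j)$ whenever $p_{ij}>0$. Decompose $G$ into connected components; after permuting rows and columns so that each component occupies a contiguous block of indices, both $P$ and $Q$ become block-diagonal, $P=\diag(B_1,\dots,B_k)$ and $Q=\diag(B_1',\dots,B_k')$, with $B_i,B_i'$ of the same dimension $u_i\times v_i$. The stability relations (column normalization of $P$ is $Q$, row normalization of $Q$ is $P$) are preserved blockwise because normalization acts within a block once the matrix is block-diagonal and has no zero rows/columns. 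So it suffices to prove the statement for a single block, i.e. to show: if $(B,B')$ is a stable pair whose bipartite pattern graph is \emph{connected}, then $B$ has constant column sum $c = u_i/v_i$.

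For the single-block case, let $r_j = \sum_i b_{ij}$ be the $j$-th column sum of $B$. Column-normalizing $B$ gives $B' = B\,\Diag{1/r_1,\dots,1/r_v}$, so $b'_{ij} = b_{ij}/r_j$. Since $B$ is already row normalized and $B'$ is obtained by scaling columns, the $i$-th row sum of $B'$ is $\sum_j b_{ij}/r_j$. But stability demands that row-normalizing $B'$ returns $B$; equivalently, $B' = \Diag{\rho_1,\dots,\rho_u}\,B$ where $\rho_i = 1/\sum_j b'_{ij}$ is the $i$-th row sum of $B'$. Comparing the two expressions for $b'_{ij}$, for every edge $(i,j)$ of the pattern we get $b_{ij}/r_j = \rho_i\, b_{ij}$, hence $\rho_i = 1/r_j$ whenever $b_{ij}>0$. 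Thus $r_j$ is constant along edges, and since the pattern graph is connected, $r_j \equiv c$ for a single constant $c$ and $\rho_i \equiv 1/c$. Finally, summing all entries of $B$ two ways: summing row sums gives $u_i$ (each row sums to $1$), and summing column sums gives $v_i \cdot c$; therefore $c = u_i/v_i$.

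The main obstacle is the bookkeeping in the reduction step: one must check carefully that permuting to block-diagonal form is legitimate (no edges cross components, and the paper's standing assumption of no zero rows or columns guarantees every vertex has positive degree, so the components genuinely partition both index sets) and that "stable" descends to each block — in particular that the normalization constants in the full matrix agree with the per-block ones, which holds precisely because a block-diagonal matrix normalizes block-by-block. Once that is in place, the connected-component argument above is short. One should also note the symmetric statement for $Q$ follows by the same argument with the roles of row/column normalization swapped, or simply by observing $Q$ is itself stable (it is contained in the stable pair, reading the iteration from $Q$) with the transposed pattern.
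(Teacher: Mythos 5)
Your proof is correct, and it takes a genuinely different route from the paper's. The common core is the observation that for a stable pair, whenever $p_{ij}>0$ the product of the $j$-th column sum of $P$ and the $i$-th row sum of $Q$ equals $1$ (the paper calls this Claim~$(*)$; you derive it by comparing the two expressions for $b'_{ij}$). Where you diverge is in how the block-diagonal structure is obtained: you take the connected components of the bipartite pattern graph and observe that this immediately yields a block-diagonal form and that stability descends to each block, and only then use Claim~$(*)$ together with connectedness to show the column sums are constant inside a block. The paper instead defines blocks by grouping together the columns attaining the maximal column sum $c_{\max}$ (and the rows attaining $r_{\min}=1/c_{\max}$), uses Claim~$(*)$ to show this group is decoupled from the rest, and then inducts on the smaller stable pair formed by the remaining rows and columns. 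The two decompositions need not coincide --- yours is by connectedness, the paper's is by the value of the column sum, so yours can be finer --- but both satisfy the statement. Your route arguably makes the ``block-diagonal up to permutation'' step more transparent, since it is forced by the absence of edges between components; the paper's is more self-contained algebraically.

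One small slip you should fix: you define $\rho_i = 1/\sum_j b'_{ij}$ and in the same breath say it ``is the $i$-th row sum of $B'$''; these contradict each other. For the relation $B'=\Diag{\rho_1,\dots,\rho_u}B$ (and hence $b'_{ij}=\rho_i b_{ij}$, which you compare against $b'_{ij}=b_{ij}/r_j$) to hold, $\rho_i$ must be the row sum $\sum_j b'_{ij}$ itself, not its reciprocal. With that corrected the argument goes through as you intend.
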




In addition to providing a convergence format for more general discrete joint distributions, the block diagonal form implies relations between subset of data and concepts that can be leveraged for developing structured models and joint distributions. 

Let $(\LLmat, \TTmat)$ be the limit pair of SK iteration on $\M$. $\LLmat$ and $\TTmat$ must have the same partial pattern of $\M$ as the SK iteration preserves zeros. 
Hence, the existence of a pair of \textit{stable} matrices with partial pattern of $\M$
is necessary for the convergence of SK. 
In Proposition~\ref{EU}, we show that this condition is also sufficient.

\textit{Stable} matrices with partial pattern of $\M$ can be partially ordered with respect to their patterns.
We use $\overbar{\M}$ to denote the matrix obtained from $\M$ by setting elements outside the maximum partial pattern to zeros.
Note that elements outside the maximum partial pattern of a rectangular matrix shall be treated as off-diagonal elements in a square matrix.


\begin{prop} \label{EU}
A non-negative rectangular matrix $\M$ converges to a pair of \textit{stable} matrices 
under SK iteration if and only if there exists a \textit{stable} pair of matrices with partial pattern of $\M$.
\end{prop}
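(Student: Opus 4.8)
The plan is to separate the trivial forward implication from the substantive converse. The forward implication is immediate: row and column normalization only rescale entries that are already positive, so no Sinkhorn step ever turns a zero into a positive number; hence each $\LLmat^{k}$ and $\TTmat^{k}$, and the limits $\LLmat$ and $\TTmat$ when they exist, has a partial pattern of $\M$. Since every matrix met along the way has no zero row or column, the normalization maps are continuous there, so passing to the limit in the two identities defining SK shows that $(\LLmat,\TTmat)$ is stable in the sense of Definition~\ref{def:sk_stable} --- a stable pair with partial pattern of $\M$, as required.

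For the converse, suppose $(P,Q)$ is stable with $P\prec\M$; I would proceed in four steps. (i) Normalize the setup: SK commutes with simultaneous row and column permutations, so by Proposition~\ref{prop:stable_format} I may reorder rows and columns of $\M$ so that $P=\diag(B_{1},\dots,B_{k})$ with each $B_{i}$ row normalized and of constant column sum $u_{i}/v_{i}$; note $\M$ itself need not be block diagonal, it merely contains the pattern of $P$ and possibly further entries, inside and across the blocks. (ii) Extract limits: each $\LLmat^{k}$ lies in the compact set of nonnegative, row-normalized matrices with partial pattern of $\M$, and each $\TTmat^{k}$ in the corresponding column-normalized set, so standard compactness arguments yield a subsequential limit pair $(P^{\ast},Q^{\ast})$, which is stable provided it has no zero row or column. (iii) Pin down the limit --- the heart of the argument. (iv) Conclude: if every subsequential limit equals one fixed stable pair, then $\LLmat^{k}$ and $\TTmat^{k}$ converge.

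Step (iii) I would split in two. First, show that every entry of $\M$ outside the maximum partial pattern of $\M$ --- the positions where $\overbar{\M}$ is nonzero --- is driven to $0$ by SK, while every entry at a position of $\overbar{\M}$ stays bounded away from $0$ (this also supplies the fact, needed in (ii), that a subsequential limit has no zero row or column). This is the rectangular analogue of the Sinkhorn--Knopp dichotomy that, for a square matrix with support but not total support, exactly the entries lying on no positive diagonal die out under the iteration; the hypothesis that some stable pair $\prec\M$ exists is precisely what forces $\overbar{\M}$ to meet every row and every column, so the iteration cannot collapse a row or column. Second, show that $\overbar{\M}$ --- which by Proposition~\ref{prop:stable_format} is, after a further permutation, block diagonal --- admits a diagonal scaling to a stable pair, and that such a scaling of a fixed matrix is unique up to the scalar freedom in the two diagonal factors (the rectangular form of uniqueness of scaling to prescribed marginals), with Proposition~\ref{prop:stable_format} supplying the correct target row and column sums on each block. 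Combining the two, every subsequential limit $(P^{\ast},Q^{\ast})$ coincides with the unique stable pair carried by $\overbar{\M}$, and (iv) finishes the proof.

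I expect step (iii) to be the main obstacle: lifting the support / total-support analysis of Sinkhorn--Knopp from square to rectangular matrices, simultaneously proving that the off-$\overbar{\M}$ entries vanish (so that no phantom limit with a larger pattern survives) and that the limiting scaling on each block is unique (so that no two subsequential limits differ). A fallback worth recording is to pass $\M$ through the Kronecker product with a suitable rectangular all-ones block, turning it into a square matrix whose SK iterates are scalar multiples of those of $\M$; but the resulting square problem is again convergence to a stable \emph{pair}, not to a single doubly stochastic matrix, so the same two ingredients resurface and essentially nothing is saved.
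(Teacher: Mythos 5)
Your outline tracks the paper's proof step for step: forward direction by zero preservation plus continuity of normalization off zero rows and columns; converse by compactness to extract subsequential limits, then a dichotomy on patterns and a uniqueness-of-diagonal-scaling argument to force all subsequential limits to coincide. So this is the same approach, not a different one.

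The issue is that you explicitly leave both halves of step (iii) --- the heart, as you say --- unproved, and these are precisely the nontrivial pieces the paper supplies. For the dichotomy, the paper's Lemma~\ref{limit_intermediate} shows that any subsequential limit pair $(\LLmat', \TTmat')$ satisfies $(P,Q) \prec (\LLmat', \TTmat') \prec \M$ for \emph{every} stable $(P,Q)\prec\M$; choosing a maximal such $(P,Q)$ (Remark~\ref{same_pattern}) then fixes the pattern of every subsequential limit. The proof is a genuine monotonicity argument: write $\LLmat^n = X_n \M Y_n$, form $f_n = \prod_i x_{in}^{1+\alpha r_i}\prod_j y_{jn}^{\alpha+c_j}$ and its companion $g_n$, where $r_i, c_j$ are the marginals of $(Q,P)$ and $\alpha$ is tuned so that $f_n \le g_n \le f_{n+1}$, and pair this with a uniform upper bound on $f_n$ to conclude $x_{in}y_{jn}$ is bounded below at $(P,Q)$-pattern positions. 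Nothing you cite substitutes for this; naming it ``the rectangular analogue of the Sinkhorn--Knopp dichotomy'' identifies what is needed but does not establish it, and that is the gap. For the uniqueness half, the paper does not prove existence-and-uniqueness of a scaling of $\overbar{\M}$ directly (your framing); it shows each subsequential limit is diagonally equivalent to $\overbar{\M}$ (Lemma~\ref{lemma:diagonal_eq}) and then invokes the fact that two diagonally equivalent matrices with identical row and column sums are equal (Lemma~\ref{lemma: diag_eq}), with Proposition~\ref{prop:stable_format} supplying the matching marginals since the limits share the same block pattern. Your ``unique up to scalar freedom in the two factors'' gestures correctly, but in a $k$-block decomposition there are $k$ scalar degrees of freedom, and it is the marginal constraints that eliminate them; that step needs to be made precise. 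In short: right skeleton, forward direction fine, but the two lemmas that carry the converse are named, not proved.
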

\textit{Proof.} The `only if' direction is clear from the above discussion. 
We now show the `if' direction.
Suppose there exists a \textit{stable} pair $(P,Q)$ such that $P\prec \M$. 
Let $\{\LLmat^1, \TTmat^1, \LLmat^2, \TTmat^2, \dots\}$ be the sequence of matrices generated by SK iteration on $\M$, where $\LLmat^k$ and $\TTmat^k$ are row and column normalized respectively. 
This sequence is bounded since each element of $\LLmat^k$ or $\TTmat^k$ is bounded above by $1$. Hence, according to  Bolzano–Weierstrass theorem, the sequence must have as a limit a pair of matrices (may not be unique). 
Let $(\LLmat, \TTmat)$ and $(\LLmat', \TTmat')$ be two pairs of such limits. To show that they are the same, we only need to 
prove that $\LLmat=\LLmat'$. Lemma~\ref{limit_intermediate} and Remark~\ref{same_pattern} indicate that $\LLmat$ and $\LLmat'$ must have the maximum partial pattern of $\M$, hence, they have the same pattern. Moreover, because they are limits,  $\LLmat$ and $\LLmat'$ are \textit{stable} as well. Therefore, it follows from Proposition~\ref{prop:stable_format} that up to permutations, $\LLmat$ and $\LLmat'$ have the same column sums. 
Further, Lemma~\ref{lemma:diagonal_eq} implies that there exists
$X, Y$ and $X', Y'$ such that $\LLmat=X \overbar{\M} Y$ and $\LLmat'=X' \overbar{\M} Y'$. Therefore, $\LLmat$ and $\LLmat'$ not only have the same row and column sums, but also are diagonally equivalent. Thus, Lemma~\ref{lemma: diag_eq} implies that $\LLmat=\LLmat'$. 
$\Box$

In fact, the existence of a stable pair of matrices with partial pattern of $\M$
is naturally satisfied for all $\M$ under considerations(non-negative matrices without zero rows or zero columns), thus:

\begin{prop}\label{prop:exist_sk_subpattern}
For any matrix $\M$, 
there exists a \textit{stable} pair of matrices $(P,Q)$ such that $P$ and $Q$ have a partial pattern of $\M$.
\end{prop}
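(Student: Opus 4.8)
The plan is to construct a stable pair $(P,Q)$ with $P \prec \M$ explicitly, rather than appealing to an existence or fixed-point argument. The natural candidate is to take $P$ to be supported exactly on the maximum partial pattern of $\M$, i.e. on the pattern of $\overbar{\M}$. Since $\overbar{\M}$ is obtained by deleting precisely the off-diagonal (support-unreachable) entries, it is a matrix with total support in the rectangular sense, and its support decomposes, up to row and column permutations, into a block-diagonal collection of fully indecomposable blocks. So I would first reduce to the case where the pattern of $\overbar{\M}$ is itself fully indecomposable (a single block), handle that case, and then reassemble the blocks: a block-diagonal assembly of stable pairs is stable, and its pattern is a partial pattern of $\M$ provided each block's pattern is.

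For a single fully indecomposable block, I would invoke the classical Sinkhorn–Knopp theory: a nonnegative matrix with total support admits scalings to a doubly stochastic matrix in the square case, and in the rectangular case (as in Proposition~\ref{prop:stable_format} of this paper) admits row/column scalings $X \overbar{\M} Y$ whose row sums are all $1$ and whose column sums are all the constant $u/v$. Concretely, I would run SK iteration on $\overbar{\M}$ itself: because $\overbar{\M}$ has total support, the SK iteration on it converges (this is the known square/rectangular Sinkhorn convergence theorem for matrices with total support, which I may assume or cite), and its limit pair $(P,Q)$ is by definition stable. Since $P$ and $Q$ are obtained from $\overbar{\M}$ by positive diagonal scalings, they have exactly the pattern of $\overbar{\M}$, which is a partial pattern of $\M$. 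This yields the desired stable pair.

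Alternatively, and perhaps cleaner to write, I can sidestep convergence entirely: take the disjoint block decomposition of $\overbar{\M}$ into fully indecomposable blocks $B_1,\dots,B_k$ of sizes $u_i \times v_i$, and for each $B_i$ produce a doubly-stochastic-type scaling directly. For a fully indecomposable $u_i\times v_i$ nonnegative matrix there exist positive diagonal matrices making all row sums $1$ and all column sums $u_i/v_i$; call the result $P_i$, and let $Q_i$ be its column normalization. One checks $(P_i,Q_i)$ is stable: column-normalizing $P_i$ gives $Q_i$ by construction, and row-normalizing $Q_i$ returns $P_i$ because the column sums of $P_i$ were already constant, so column normalization only rescales columns by a global constant and thus commutes appropriately with row normalization. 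Then $P := \diag(P_1,\dots,P_k)$ and $Q := \diag(Q_1,\dots,Q_k)$ form a stable pair whose common pattern equals the pattern of $\overbar{\M}$, hence a partial pattern of $\M$.

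The main obstacle is purely bookkeeping at the reduction step: one must be certain that deleting the off-diagonal entries of a rectangular $\M$ leaves a nonempty pattern that genuinely decomposes into fully indecomposable rectangular blocks with no zero rows or columns, and that "stable" is closed under block-diagonal assembly and under the permutations used to expose the block structure. Both facts are routine given Definition~\ref{def:partial_pattern}, Definition~\ref{def:sk_stable}, and Proposition~\ref{prop:stable_format}, but they are where care is needed; the only genuinely substantive input is the existence of the scaling for a single indecomposable block, which is the classical Sinkhorn result already relied upon earlier in the paper.
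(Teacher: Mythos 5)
The core difficulty in your proposal is circularity. For a rectangular $\M$, the paper defines $\overbar{\M}$ as the matrix obtained from $\M$ by zeroing entries outside the \emph{maximum partial pattern}, and that notion is itself defined via stable pairs $(P,Q)$ with $P \prec \M$. Until one knows at least one such stable pair exists — which is exactly Proposition~\ref{prop:exist_sk_subpattern} — the object $\overbar{\M}$ is not available, and there is no purely combinatorial "positive diagonal" definition of $\overbar{\M}$ in the rectangular case to fall back on. You would need to supply an independent combinatorial characterization (something in the spirit of a Dulmage--Mendelsohn decomposition) and then prove that its blocks really do admit the required scalings, which is precisely the hard content of the rectangular Sinkhorn--Menon theory and not "purely bookkeeping." Your first variant (run SK on $\overbar{\M}$ and cite convergence) is doubly circular, since convergence of SK iteration for rectangular matrices is Theorem~\ref{main}, which the paper derives \emph{from} Propositions~\ref{EU} and \ref{prop:exist_sk_subpattern}.

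Your second variant replaces the convergence appeal with an existence claim — that each fully indecomposable $u_i \times v_i$ block can be scaled to row sums $1$ and column sums $u_i/v_i$ — which is a genuine theorem, not an elementary fact; and the reduction step (that deleting the right entries yields blocks of exactly the right dimensions and connectivity, compatible with the block structure in Proposition~\ref{prop:stable_format}) is where the real work lies. The paper avoids all of this by noticing that the proposition only demands \emph{some} stable pair with a subpattern of $\M$, not the maximal one. It constructs, by induction on dimension, a binary matrix $A \prec \M$ that is block-diagonal up to permutation with each block a single row or column of ones; normalizing each such block gives a stable pair by a trivial check, and block-diagonal assembly of stable pairs is stable (this part of your argument is correct). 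That construction needs no scaling existence theorem, no definition of $\overbar{\M}$, and no decomposition theory — it is self-contained and elementary, whereas your route would require you to first prove the rectangular scaling existence result and a non-circular maximal-pattern characterization, at which point you would have done strictly more work than the paper does.
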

Construction of a such $(P,Q)$ is illustrated below. 
\begin{example}\label{eg:subpattern}
Let $\tiny \M=\begin{pmatrix} m_{11}& m_{12} & m_{13}\\ m_{21} & m_{22}&m_{23} \end{pmatrix}$ be a matrix without zero row or zero column. The first two columns are both non-zero implies that up to permutation, either $m_{11}\neq0, m_{12}\neq 0$ or $m_{11}\neq0, m_{22}\neq 0$  . (1)~If $m_{11}\neq0, m_{22}\neq 0$, we may assume that $m_{23}\neq 0$ (up to permutation). In this case, let $\tiny A=\begin{pmatrix} 1& 0& 0\\ 0 & 1&1\end{pmatrix}$.
(2)~Otherwise $m_{11}\neq0, m_{12}\neq 0$. (2-A)~If further $m_{23}\neq 0$, let $\tiny A=\begin{pmatrix} 1& 1& 0\\ 0 & 0&1\end{pmatrix}$.
(2-B)~If $m_{23}= 0$, then $m_{13}\neq 0$. There must exist a non-zero element in the second row of $\M$. Up to permutation,
we may assume that $m_{21}\neq 0$, let $\tiny A=\begin{pmatrix} 0& 1& 1\\ 1 & 0&0\end{pmatrix}$.
In all cases, $A\prec \M$ is block-wise diagonal with each block in the form of a row or column vector. 
Let $P,Q$ be row and column normalization of $A$ respectively. 
It is straightforward to check that $(P,Q)$ is \textit{stable}. 
\end{example}
Propositions~\ref{EU} and \ref{prop:exist_sk_subpattern} together imply our main result:
\begin{thm}\label{main}
Every rectangular matrix converges to a pair of stable matrices under SK iteration.
\end{thm}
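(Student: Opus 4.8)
The plan is to obtain Theorem~\ref{main} as an immediate corollary of the two preceding propositions, with essentially no further work at the level of the theorem itself. First I would apply Proposition~\ref{prop:exist_sk_subpattern}: since the matrices under consideration are non-negative with no zero rows or columns, for the given rectangular $\M$ there exists a \textit{stable} pair $(P,Q)$ with $P \prec \M$ (and $Q\prec\M$). Then I would invoke the `if' direction of Proposition~\ref{EU}, which says precisely that the existence of such a stable pair of matrices with a partial pattern of $\M$ forces SK iteration on $\M$ to converge to a pair of stable matrices. Chaining these two statements completes the argument in one line.

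If instead one wanted a self-contained proof, the substance would all migrate into re-deriving Proposition~\ref{EU}. There I would: (i) observe that the SK sequence $\{\LLmat^k,\TTmat^k\}$ lies in the compact set of matrices with entries in $[0,1]$, so by Bolzano--Weierstrass it admits convergent subsequences; (ii) show every subsequential limit pair is \textit{stable} and carries the maximal partial pattern of $\M$ (using that SK preserves zeros and does not permanently annihilate an on-diagonal entry — the role of Lemma~\ref{limit_intermediate} and Remark~\ref{same_pattern}); (iii) apply Proposition~\ref{prop:stable_format} to conclude that any two such limits share, up to permutation, the same row and column sums, and Lemma~\ref{lemma:diagonal_eq} to write each as $X\overbar{\M}Y$, so they are diagonally equivalent; and (iv) invoke the uniqueness Lemma~\ref{lemma: diag_eq} to deduce all subsequential limits coincide, hence the whole sequence converges. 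For the existence half I would simply unpack Example~\ref{eg:subpattern}: select (up to permutation) a block-diagonal $0/1$ submatrix $A\prec\M$ whose blocks are single rows or single columns, and take $(P,Q)$ to be the row- and column-normalizations of $A$, which is visibly stable.

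The genuine obstacle is not Theorem~\ref{main} at all but steps (ii) and (iv) inside Proposition~\ref{EU} — ruling out oscillation of the SK iterates and pinning down a unique limit — which rest on the diagonal-equivalence machinery. Granting the earlier results, the theorem is a two-citation consequence; the only thing to be careful about is noting that the hypothesis ``no zero rows or columns'' is exactly what makes Proposition~\ref{prop:exist_sk_subpattern} applicable, so that the stable-pair condition of Proposition~\ref{EU} is automatically met for every admissible $\M$.
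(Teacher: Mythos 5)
Your proposal matches the paper's argument exactly: the theorem is stated as an immediate consequence of chaining Proposition~\ref{prop:exist_sk_subpattern} (existence of a stable pair with partial pattern of $\M$) with the `if' direction of Proposition~\ref{EU}. Your additional unpacking of the internal structure of Proposition~\ref{EU} is also faithful to the paper's own proof of that proposition, so there is no gap or divergence to report.
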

\begin{remark}\label{rmk:diff_scalar_SK}
Theorem~\ref{main} is different from the classical convergence result for \textit{scalar Sinkhorn iteration} \citep{menon1969spectrum}.
Let $\M$ be a $u\times v$-matrix, $\mathbf{r}=(r_1, \dots, r_u)^T$ be column vector and $\mathbf{c}=(c_1, \dots, c_v)$ row vector. Similarly to the (regular) SK iteration, scalar SK iteration also alternates between row and column normalizing steps. 
However in each step of scalar SK, row-$i$ (column-$j$) is normalized to have sum $r_i$ (sum $c_j$) instead of $1$. 
The convergence~\footnote{Here convergence means the sequence generated by the iterative process converges to a single matrix.} of scalar SK on a given tuple $(\M, \mathbf{r}, \mathbf{c})$ has been intensively studied.
A complete summary of equivalent convergence criteria are described in \citep{idel2016review}. 
Unfortunately, we can not simply apply these existing results:
(1) Normalizing with respect to $\mathbf{r}$ and $\mathbf{c}$ has no statistical basis for our setting.
(2) The convergence criteria are hard to verify. 
(3) For a given model, the teacher's data selection matrix needs not to be the same as the learner's concepts inferring matrix.
\end{remark}
\begin{cor}\label{cor:M_barM_SKsame}
SK iteration of $\M$ and $\overbar{\M}$ converge to the same limit. Therefore, $CI(\M)=CI(\overbar{\M})$.
\end{cor}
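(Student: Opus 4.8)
The plan is to reduce this to the uniqueness machinery already developed for Proposition~\ref{EU}, observing that passing from $\M$ to $\overbar{\M}$ changes neither the admissible hypotheses nor the relevant pattern data.

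First I would record that the bar operation is idempotent and preserves our standing assumptions. The matrix $\overbar{\M}$ is supported exactly on the maximum partial pattern of $\M$, which is the union of the patterns of all \emph{stable} matrices $P\prec\M$; this union is nonempty by Proposition~\ref{prop:exist_sk_subpattern}, and the explicit constructions in Example~\ref{eg:subpattern} show each such $P$ meets every row and every column. Hence $\overbar{\M}$ has no zero row or zero column, its own maximum partial pattern equals its pattern, and $\overbar{\overbar{\M}}=\overbar{\M}$. In particular $(P,Q)$ is \emph{stable} with $P\prec\M$ if and only if $P\prec\overbar{\M}$, so by Proposition~\ref{EU} both $\M$ and $\overbar{\M}$ converge under SK iteration.

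Next I would replay the limit argument from the proof of Proposition~\ref{EU} with $\M$ and $\overbar{\M}$ in place of the two limit candidates. Let $(\LLmat,\TTmat)$ and $(\LLmat',\TTmat')$ be the SK limit pairs of $\M$ and $\overbar{\M}$ respectively. By Lemma~\ref{limit_intermediate} and Remark~\ref{same_pattern}, $\LLmat$ and $\LLmat'$ each attain the maximum partial pattern of their source, which in both cases is the pattern of $\overbar{\M}$; so they share a pattern. Both are \emph{stable} and row normalized, so Proposition~\ref{prop:stable_format} gives them the same block-diagonal structure and hence the same row and column sums. Lemma~\ref{lemma:diagonal_eq} writes $\LLmat=X\overbar{\M}Y$ and $\LLmat'=X'\overbar{\overbar{\M}}Y'=X'\overbar{\M}Y'$, so $\LLmat$ and $\LLmat'$ are diagonally equivalent with identical row and column sums; Lemma~\ref{lemma: diag_eq} then forces $\LLmat=\LLmat'$, and $\TTmat=\TTmat'$ follows by column normalization. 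Thus the two SK iterations have the same limit pair, and since $\CI$ depends on $\M$ only through this pair (Definition~\ref{def:CIndex}), $CI(\M)=CI(\overbar{\M})$.

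The only genuinely new point beyond Proposition~\ref{EU}, and the step I would be most careful about, is the first paragraph: verifying that zeroing the off-diagonal (out-of-maximum-pattern) entries does not create a zero row or column and does not shrink the surviving pattern, i.e.\ that $\overbar{\M}$ is again admissible with $\overbar{\overbar{\M}}=\overbar{\M}$ and the same maximum partial pattern as $\M$. Once that is in hand, the rest is a verbatim application of the earlier uniqueness proof.
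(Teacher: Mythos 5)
Your proposal is correct and follows essentially the same path as the paper's proof: both arguments reduce to showing that the $\LLmat$-limits of $\M$ and $\overbar{\M}$ are each diagonally equivalent to $\overbar{\M}$ (Lemma~\ref{lemma:diagonal_eq}), share a pattern, and therefore have the same row/column sums (Proposition~\ref{prop:stable_format}), so Lemma~\ref{lemma: diag_eq} forces equality. Your first paragraph simply makes explicit the idempotence $\overbar{\overbar{\M}}=\overbar{\M}$ and admissibility of $\overbar{\M}$, which the paper's proof uses silently.
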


\begin{remark}\label{rmk:finite_step}
Corollary~\ref{cor:M_barM_SKsame} indicates that the elements outside the maximum partial pattern of $\M$ have no effect on the limit, and thus on \textit{Cooperative Index}. For instance, in square matrices, such elements are precisely positive off diagonal entries. They are easy to detect using ideas from graph theory \citep{dulmage1958coverings}. 
Being able to pass to the maximal partial pattern 
makes the cooperative inference much more feasible.  The convergence of $\overbar \M$ is linear, where as the convergence of $\M$ slower \citep{soules1991rate}.
\end{remark}





In the rest of this paper, we assume $\M$ is square. With machinery developed in this section, similar analysis can be made for rectangular matrices. 

\section{Equivalence and sensitivity}\label{sec:cr_sn}
We first introduce cross ratio equivalence and show that models whose joint distribution matrices are cross ratio equivalent, are the same under cooperative inference. Further, we will show that cooperative inference on models is robust to small perturbations on the joint distribution matrix $\M$. 
These features are essential because in most realistic situations we only have access to noisy data points, and because they provide flexibility in model choice by allowing selection of any joint distribution in a cross ratio equivalent class. 



\subsection{Cross Ratio Equivalence}\label{sec:CR_eq}
Intuitively, given a model, SK iteration is a process that selects a representation for two cooperative agents. 
We develop a method to characterize the models that yield to the same representation.

SK iteration can be interpreted as a map between the initial and the limit matrices.
Let $\A$ be the set of $n\times n$ matrices that has at least one positive diagonal, $\overbar{\A}\subset \A$ be the set of $n\times n$ matrices with \textit{total support} (Definition~\ref{def:positive_diag}) and $\B$ be the set of $n\times n$ doubly stochastic matrices. According to \citep{sinkhorn1967concerning}, SK iteration of any $\M\in \A$ converges to a unique matrix $\M^*\in \B$. Hence SK iteration can be viewed as a map $\Phi$ from $\A$ to $\B$ where $\Phi(\M)=\M^*$.


It is important to note that $\Phi$ is not injective. For instance, in Example~\ref{eg:off_diag*} below, with any choices of $m_{12}$ and $m_{32}$, $\M$ maps to the same image under $\Phi$. For a matrix $\LLmat\in \B$, $\Phi^{-1}(\LLmat)$ is used to denote the set of all matrices in $\A$ that map to $\LLmat$.

We will now introduce the notion-\textit{cross ratio equivalence} between square matrices and 
show that the preimage set of a matrix $\LLmat\in \B$ can be completely characterized by its \textit{cross ratios}. 
\begin{definition}\label{def:cross ratios}
Let $A, B$ be two $n\times n$ matrices and $D^A_1=\{a_{1,\sigma(1)},\dots, a_{n, \sigma(n)}\}$ and $D^A_2=\{a_{1,\sigma'(1)},\dots, a_{n, \sigma'(n)}\}$ be two positive diagonals of $A$ determined by permutations $\sigma, \sigma'\in S_n$ (Definition~\ref{def:positive_diag}). 
Denote the products of elements on $D^A_1$ and $D^A_2$ by $d^A_1=\Pi_{i=1}^{n}a_{i, \sigma(i)}, d^A_2=\Pi_{i=1}^{n}a_{i, \sigma'(i)}$ respectively. Then $\CR(D^A_1, D^A_2)=d^A_1/d^A_2$ is called the \textbf{cross ratio} between $D^A_1$ and $D^A_2$ of $A$. Further, let the diagonals in $B$ determined by the same $\sigma$ and $\sigma'$ be 
$D^B_1=\{b_{1,\sigma(1)},\dots, b_{n, \sigma(n)}\}$ and $D^B_2=\{b_{1,\sigma'(1)},\dots, b_{n, \sigma'(n)}\}$. We say $A$ is \textbf{cross ratio equivalent} to $B$, denoted by $A\overset{cr}{\sim} B$, if $d_i^A\neq 0 \Longleftrightarrow d_i^B\neq 0$
and $\CR(D^A_1, D^A_2)=\CR(D^B_1, D^B_2)$ holds for 
any $D^A_1$ and $D^A_2$.
\end{definition}

\begin{example}
Let $\tiny A=\begin{pmatrix} 3 & 2 & 1\\ 0 & 1&1\\1&0&1 \end{pmatrix},
B=\begin{pmatrix}9 & 20 & 6\\ 0 & 5&3\\2&0&4 \end{pmatrix}$.
$A$ has three positive diagonals $D^A_1=\{a_{11}, a_{22}, a_{33}\}$, $D^A_2=\{a_{12}, a_{23}, a_{31}\}$ and
$D^A_3=\{a_{13}, a_{22}, a_{31}\}$ with $d^A_1=3, d^A_2=2,d^A_3=1$. $B$ has three corresponding positive 
diagonals $D^B_1, D^B_2$ and $D^B_3$ with $d^B_1=180, d^B_2=120,d^B_3=60$. 
It is easy to check that $\CR(D^A_i,D^A_j)=\CR(D^B_i,D^B_j)$ for any $i,j\in \{1,2,3\}$.
Hence $A$ is \textit{cross ratio equivalent} to $B$.
\end{example}

\begin{remark}
(1) Definition~\ref{def:cross ratios} implies that if $A\crs B$, then $\overbar{A}$ and $\overbar{B}$ (Definition~\ref{def:positive_diag})
must have the same pattern. Otherwise there exists a positive diagonal $D_1^A$ of $A$ (or B) whose corresponding diagonal $D_1^B$ in B (or A) contains zero ($d^A_1\neq 0$ whereas $d^B_1=0$). 
\newline
(2)  
Let $A$ and $B$ be matrices with the same pattern.
Assume they both have $N_d$ positive diagonals. 
To determine whether $A$ is \textit{cross ratio equivalent} to $B$, instead of examining $\binom{N_d}{2}$ pairs of cross ratios, it is sufficient to check whether $CR(D^A_1, D^A_i)=CR(D^B_1, D^B_i)$, $i\in \{1, \dots, N_d\}$ holds for a fixed positive diagonal $D^A_1$. 
\end{remark}

\begin{prop}\label{prop:preimage}
Let $\M\in A$ be a consistency matrix and $\LLmat\in \B$ be a doubly stochastic matrix.
Then $\M \in \Phi^{-1}(\LLmat)$ if and only if $\M$ is \textit{cross ratio equivalent} to $\LLmat$.
\end{prop}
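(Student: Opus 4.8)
The plan is to establish both directions by relating matrices in $\Phi^{-1}(\LLmat)$ to $\LLmat$ through diagonal scaling, since by Sinkhorn's theorem $\M \in \Phi^{-1}(\LLmat)$ precisely when there exist positive diagonal matrices $X = \Diag{x_1,\dots,x_n}$ and $Y = \Diag{y_1,\dots,y_n}$ with $\LLmat = X \M Y$ (after passing to $\overbar\M$, using Corollary~\ref{cor:M_barM_SKsame} so that I may assume $\M$ has total support). The key observation is that diagonal scaling multiplies the product of entries on any positive diagonal $D_\sigma = \{m_{1,\sigma(1)},\dots,m_{n,\sigma(n)}\}$ by the common factor $\left(\prod_{i=1}^n x_i\right)\left(\prod_{i=1}^n y_i\right)$, which does not depend on $\sigma$. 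Hence every cross ratio $\CR(D_1,D_2) = d_1/d_2$ is left unchanged by the scaling, and the zero/nonzero pattern of diagonal products is also preserved, so $\M \crs \LLmat$ immediately follows. This handles the `only if' direction.

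For the `if' direction, I would start from $\M \crs \LLmat$ and construct the scaling explicitly. Fix one positive diagonal, say the one given by a permutation $\sigma_0$ (WLOG the identity after permuting, so $m_{ii} > 0$ and $l_{ii} > 0$ for all $i$). The cross-ratio equivalence says that for every positive diagonal $D_\sigma$ of $\M$, we have $\frac{\prod_i m_{i,\sigma(i)}}{\prod_i m_{ii}} = \frac{\prod_i l_{i,\sigma(i)}}{\prod_i l_{ii}}$. I want to find $x_i, y_j > 0$ such that $l_{ij} = x_i m_{ij} y_j$ for every $(i,j)$ in the common pattern. Using the diagonal entries, define $x_i$ and $y_i$ so that $x_i y_i = l_{ii}/m_{ii}$ — there is a one-parameter family of such choices; pin it down by setting, say, $x_1 = 1$ and propagating. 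The nontrivial content is to show that with a suitable choice the relation $l_{ij} = x_i m_{ij} y_j$ holds at every off-diagonal nonzero entry $(i,j)$ as well; this is where the cross-ratio hypotheses on all positive diagonals (not just one) get used, by connecting an arbitrary entry $(i,j)$ to the reference diagonal via a permutation whose support includes $(i,j)$, and then reading off the required equality from the cross-ratio identity. Once $\LLmat = X\M Y$ is established, $\M$ and $\LLmat$ are diagonally equivalent, and since $\LLmat$ is doubly stochastic, $\Phi(\M) = \LLmat$ by uniqueness in Sinkhorn's theorem, giving $\M \in \Phi^{-1}(\LLmat)$.

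I expect the main obstacle to be the combinatorial bookkeeping in the `if' direction: showing that the scaling factors determined on one diagonal are forced to be globally consistent on the entire pattern. The subtlety is that an arbitrary nonzero entry $m_{ij}$ with $i \neq j$ need not lie on a positive diagonal that shares all its other entries with the reference diagonal — in general the connecting permutation differs in a whole cycle of positions, so the cross-ratio identity one gets is a product relation over that cycle rather than a single-entry relation. To extract $l_{ij} = x_i m_{ij} y_j$ one must induct on cycle length (or argue via a spanning-tree / connectivity argument on the bipartite graph of the pattern, which is connected in each block by the block-diagonal structure of Proposition~\ref{prop:stable_format}), peeling off already-known entries one at a time. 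I would organize this as: (i) reduce to total support; (ii) prove scaling-invariance of cross ratios for the easy direction; (iii) set up the scaling variables from a reference diagonal; (iv) run the inductive/connectivity argument to propagate the scaling identity to all pattern entries; (v) invoke uniqueness in Sinkhorn's theorem to conclude.
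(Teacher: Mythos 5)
Your easy direction ($\M \in \Phi^{-1}(\LLmat) \Rightarrow \M \crs \LLmat$) is essentially the paper's: reduce to total support via Corollary~\ref{cor:M_barM_SKsame}, write $\M$ and $\LLmat$ as diagonally equivalent via Sinkhorn's theorem, and observe that the factor $\left(\prod_i x_i\right)\left(\prod_j y_j\right)$ cancels in any cross ratio. The converse is where you diverge. The paper's argument is a short indirect one: set $\M^* = \Phi(\M)$; the forward direction gives $\M \crs \M^*$, transitivity of $\crs$ gives $\M^* \crs \LLmat$, so the two doubly stochastic matrices $\M^*$ and $\LLmat$ have \emph{proportional} diagonal products; the paper then cites Sinkhorn (1969) for the fact that distinct doubly stochastic matrices cannot have proportional corresponding diagonal products, forcing $\M^* = \LLmat$. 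You instead propose to build the diagonal scaling $\LLmat = X\M Y$ directly from a reference diagonal and propagate it across the bipartite pattern graph via a spanning-tree/cycle argument, then invoke Sinkhorn uniqueness. That route is valid in principle and more constructive, but the crux you flag --- showing that cross-ratio equivalence (defined over full-length positive diagonals) implies agreement of \emph{all} bipartite cycle products, so the scaling is globally consistent --- is a nontrivial combinatorial lemma that you sketch but do not prove. One must show that any cycle in the pattern graph of a total-support matrix can be completed to a pair of permutations whose cross ratio \emph{is} that cycle product; this is true but requires care (and the connectivity you invoke is really the fully-indecomposable-component decomposition of $\overbar{\M}$, which is related to but not literally Proposition~\ref{prop:stable_format}). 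In short: same easy direction, genuinely different hard direction; the paper trades your cycle-propagation lemma for one clean application of $\Phi$ plus a citation, which is shorter, while yours is more self-contained if the propagation lemma is carried out.
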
 
\textit{Sketch of proof.}
Let $\M \in \Phi^{-1}(\LLmat)$, we now show they have the same cross ratios.
Since $\M$ and $\overbar{\M}$ have exactly the same positive diagonals, we may assume that $\M$ has total support.
Hence, \citep{sinkhorn1967concerning} implies that 
there exist diagonal matrices $X=\diag(x_1, \dots, x_n)$ and $Y=\diag(y_1, \dots, y_n)$ such that $\M=X \LLmat Y$.
In particular, $m_{ij}=x_i\times l_{ij} \times y_j$ holds, for any element $m_{ij}$. 
Let $D^{\M}_1=\{m_{i,\sigma(i)}\}$, $D^{\M}_2=\{m_{i,\sigma'(i)}\}$
be two positive diagonals of $\M$ and $D^{\LLmat}_1=\{l_{i,\sigma(i)}\}$, $D^{\LLmat}_2=\{l_{i,\sigma'(i)}\}$ be the corresponding positive diagonals in $\LLmat$. Then:
\begin{equation*}\label{eq:same_cr}
\footnotesize
 \begin{split}
CR(D^{\M}_1,D^{\M}_2)& =\frac{\Pi_{i=1}^{n}m_{i,\sigma(i)}}{\Pi_{i=1}^{n}m_{i,\sigma'(i)}}
=\frac{\Pi_{i=1}^{n}x_i\times l_{i,\sigma(i)} \times y_{\sigma(i)}}{\Pi_{i=1}^{n} x_i\times l_{i,\sigma'(i)}\times y_{\sigma'(i)}}\\
& = \frac{\Pi_{i=1}^{n}x_i\times \Pi_{i=1}^{n} l_{i,\sigma(i)} \times \Pi_{i=1}^{n} y_{\sigma(i)}}{\Pi_{i=1}^{n} x_i\times \Pi_{i=1}^{n} l_{i,\sigma'(i)}\times \Pi_{i=1}^{n} y_{\sigma'(i)}}\\
&=\frac{\Pi_{i=1}^{n}l_{i,\sigma(i)}}{\Pi_{i=1}^{n}l_{i,\sigma'(i)}}=CR(D^{\LLmat}_1,D^{\LLmat}_2) \hspace{0.5in} \Box
\end{split}
\end{equation*}

\begin{cor}\label{cor:total_support}
For $\M_1, \M_2 \in \A$, if $\M_1\crs \M_2$ then $\CI(\M_1)=\CI(\M_2)$.
\end{cor}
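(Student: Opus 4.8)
The plan is to chain together the two results just established. Corollary~\ref{cor:M_barM_SKsame} lets me reduce each $\M_i$ to its total-support part $\overbar{\M_i}$ without changing the cooperative index, since SK iteration of $\M_i$ and $\overbar{\M_i}$ converge to the same limit; and Proposition~\ref{prop:preimage} characterizes the fiber $\Phi^{-1}(\LLmat)$ as precisely the cross-ratio-equivalence class of $\LLmat$ inside $\A$. So the strategy is: (1) observe that $\CI(\M_i) = \CI(\overbar{\M_i})$ for $i=1,2$; (2) show $\overbar{\M_1} \crs \overbar{\M_2}$, i.e. that passing to the total-support part is compatible with cross-ratio equivalence; (3) conclude $\Phi(\overbar{\M_1}) = \Phi(\overbar{\M_2})$ via Proposition~\ref{prop:preimage}; and (4) note that $\CI$ depends only on the converged pair, so equal limits force equal cooperative index.

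For step (2), the key observation is that $\M_i$ and $\overbar{\M_i}$ have exactly the same positive diagonals — this is essentially the defining property of $\overbar{\cdot}$ from Definition~\ref{def:positive_diag}, since deleting off-diagonal (positive but not on any positive diagonal) entries cannot destroy or create any positive diagonal, and the products $d_k$ over diagonals are unchanged. Hence $\CR$ values are literally unchanged by the bar operation, so $\M_1 \crs \M_2$ immediately gives $\overbar{\M_1} \crs \overbar{\M_2}$; moreover $\overbar{\M_1}, \overbar{\M_2}$ lie in $\overbar{\A} \subset \A$, so Proposition~\ref{prop:preimage} applies to them. Then if $\LLmat := \Phi(\overbar{\M_1})$, cross-ratio equivalence $\overbar{\M_2} \crs \overbar{\M_1}$ together with transitivity (both equivalent to the same doubly stochastic image, which is itself in its own class) puts $\overbar{\M_2}$ in $\Phi^{-1}(\LLmat)$, so $\Phi(\overbar{\M_2}) = \LLmat = \Phi(\overbar{\M_1})$.

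For step (4), by Definition~\ref{def:CIndex} the cooperative index is a function solely of the stable limit pair $(\LLmat, \TTmat)$; in the square total-support case this pair is the single doubly stochastic matrix $\M^* = \Phi(\M)$, so $\CI(\M) = \frac{1}{n}\,\M^* \odot \M^*$ depends only on $\Phi(\M)$. Therefore $\CI(\overbar{\M_1}) = \CI(\overbar{\M_2})$, and combined with step (1) this yields $\CI(\M_1) = \CI(\M_2)$.

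I expect the only subtlety — hardly an obstacle — to be the bookkeeping in step (2): being careful that ``cross ratio equivalent'' as defined requires matching the nonvanishing pattern of diagonal products ($d_k^A \neq 0 \Leftrightarrow d_k^B \neq 0$) as well as the ratios, and checking that the bar operation respects both. Since the bar operation removes only entries lying on no positive diagonal, every positive diagonal of $\M_i$ survives intact in $\overbar{\M_i}$ with the same product, and no new positive diagonal appears, so both conditions transfer verbatim. One should also note the hypothesis $\M_1, \M_2 \in \A$ guarantees at least one positive diagonal each, so the equivalence relation is non-vacuous and Proposition~\ref{prop:preimage} is applicable; everything else is a direct citation of the two preceding results.
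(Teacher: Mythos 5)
Your argument is correct and follows the same essential route the paper intends: chain cross-ratio equivalence through the limits via Proposition~\ref{prop:preimage} and transitivity of $\crs$ to force $\Phi(\M_1)=\Phi(\M_2)$, then observe $\CI$ depends only on the limit. The explicit detour through $\overbar{\M_i}$ and Corollary~\ref{cor:M_barM_SKsame} is harmless but unnecessary, since Proposition~\ref{prop:preimage} is already stated for all $\M\in\A$ and performs that reduction internally; you could apply it directly to $\M_1,\M_2$ and conclude in one line.
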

Proposition~\ref{prop:preimage} captures the key ingredient, \textit{cross ratios}, of a model.
It indicates that cross ratio equivalent models can be treated the same for cooperative agents.
Corollary~\ref{cor:total_support} implies that their cooperative indices are the same and hence they have the same communication effectiveness. This can be very useful in practice: (1) Models with same representation can be effectively categorized, which avoids unnecessary implementation of similar models;
(2) Models can be freely modified as long as the cross ratios are preserved which may increase computational efficiency. 

%



\subsection{Sensitivity Analysis}\label{sec:sensitivity analysis}
We now investigate sensitivity of $\Phi$ to perturbation of $\M$. 
Without loss of generality, we will assume that only one element in $\M$ is perturbed at a time as other perturbations may be treated as compositions of such.
Let $\M^{\e}=(m^{\e}_{ij})$ be a matrix obtained by varying the element $m_{st}$ of $\M=(m_{ij})$ by $\e$, i.e. 
$m^{\e}_{st}=m_{st}+\e $ and $m_{ij}=m^{\e}_{ij}$ for $(i,j)\neq (s,t)$.
We may also assume that $\e>0$. Otherwise we may view $\M$ as a matrix obtained from a positive perturbation on $\M^{\e}$.

Proposition~\ref{prop:preimage} indicates that $\Phi$ is robust to any amount of perturbation on \textit{off diagonal} elements.
In more detail, suppose that both $m^{\e}_{st}$ and $m_{st}$ are off diagonal elements of $\M^{\e}$ and $\M$ respectively. 
Then $\overbar{\M}=\overbar{\M}^{\e}$ $\implies$ $\Phi(\M)=\Phi(\overbar{\M})=\Phi(\overbar{\M}^{\e})=\Phi(\M^{\e})$ $\implies$ $\CI(\M)=\CI(\M^{\e})$. Thus we have: 

\begin{prop}\label{prop:robust_off_diag}
Cooperative Inference is robust to any amount of off diagonal perturbations on $\M$.
\end{prop}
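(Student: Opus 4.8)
The plan is to reduce Proposition~\ref{prop:robust_off_diag} to the machinery already in place, namely Corollary~\ref{cor:M_barM_SKsame} (equivalently Corollary~\ref{cor:total_support} together with Proposition~\ref{prop:preimage}), and to make precise the informal chain of implications stated just before the proposition. The key observation is that perturbing a single entry $m_{st}$ of $\M$ by an arbitrary amount $\e$ changes neither the set of positive diagonals of $\M$ nor the set of entries lying on some positive diagonal, \emph{provided} $m_{st}$ itself is off-diagonal and $m_{st}^{\e}$ remains off-diagonal. Under that hypothesis the maximal partial pattern of $\M$ and of $\M^{\e}$ coincide, so $\overbar{\M}=\overbar{\M^{\e}}$.

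First I would fix notation: let $m_{st}$ be an off-diagonal entry of $\M$ and let $\M^{\e}$ be obtained by replacing $m_{st}$ with $m_{st}+\e$, where $\e$ is such that $m_{st}^{\e}\ge 0$ and $m_{st}^{\e}$ is still off-diagonal in $\M^{\e}$. (If one only perturbs an off-diagonal zero, this is automatic as long as the new value does not create a positive diagonal through position $(s,t)$; the statement is about such perturbations.) Second, I would argue that since $\M$ and $\M^{\e}$ agree on every entry except $(s,t)$, a permutation $\sigma\in S_n$ gives a positive diagonal of $\M$ if and only if it gives a positive diagonal of $\M^{\e}$ — indeed, any positive diagonal of $\M$ avoids position $(s,t)$ (because $m_{st}$ is off-diagonal), so it uses only entries that are unchanged, and symmetrically for $\M^{\e}$ using that $m_{st}^{\e}$ is off-diagonal. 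Consequently $(s,t)$ is off-diagonal in both matrices and every other entry has the same on/off-diagonal status in both, so $\overbar{\M}=\overbar{\M^{\e}}$.

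Third, I would invoke Corollary~\ref{cor:M_barM_SKsame}: SK iteration of $\M$ and of $\overbar{\M}$ converge to the same limit, hence to the same $\Phi(\cdot)$, and likewise for $\M^{\e}$ and $\overbar{\M^{\e}}$. Chaining these,
\begin{equation*}
\Phi(\M)=\Phi(\overbar{\M})=\Phi(\overbar{\M^{\e}})=\Phi(\M^{\e}),
\end{equation*}
and therefore $\CI(\M)=\CI(\M^{\e})$ by Definition~\ref{def:CIndex}. Finally, a general off-diagonal perturbation affecting several entries can be written as a composition of single-entry perturbations of the type above (each successive step still acts on an off-diagonal entry of the current matrix), so the conclusion propagates, giving robustness to any amount of off-diagonal perturbation.

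The main obstacle — really the only subtle point — is the bookkeeping around the phrase ``off-diagonal'': one must be careful that after the perturbation the perturbed entry does not silently become on-diagonal, since increasing an entry from $0$ could in principle complete a new positive diagonal. The clean way to handle this is exactly as the text sets it up: the hypothesis is that $m_{st}$ is off-diagonal in $\M$ \emph{and} $m_{st}^{\e}$ is off-diagonal in $\M^{\e}$, which (as argued above) are in fact equivalent conditions because positive diagonals through $(s,t)$ are the same combinatorial objects in both matrices up to the value at $(s,t)$ being positive — and an off-diagonal entry is by definition never part of a positive diagonal. So there is no circularity, and the rest is a direct appeal to the already-established corollaries.
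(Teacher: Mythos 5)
Your argument matches the paper's: under the hypothesis that $m_{st}$ is off-diagonal in $\M$ and $m^{\e}_{st}$ is off-diagonal in $\M^{\e}$, you establish $\overbar{\M}=\overbar{\M}^{\e}$ and then chain Corollary~\ref{cor:M_barM_SKsame} to get $\Phi(\M)=\Phi(\overbar{\M})=\Phi(\overbar{\M}^{\e})=\Phi(\M^{\e})$ and hence $\CI(\M)=\CI(\M^{\e})$; the reduction of a general off-diagonal perturbation to a composition of single-entry ones is also exactly how the paper frames it.

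One side remark is slightly off: your claim that the two hypotheses ($m_{st}$ off-diagonal in $\M$, and $m^{\e}_{st}$ off-diagonal in $\M^{\e}$) are \emph{equivalent} does not hold in general. If $m_{st}=0$, then $(s,t)$ is vacuously off-diagonal in $\M$, yet setting $m^{\e}_{st}=\e>0$ may complete a new positive diagonal through $(s,t)$ and render it on-diagonal in $\M^{\e}$ --- this is precisely the danger you flag in the preceding sentence, and the equivalence claim contradicts it. The conditions are equivalent only when $m_{st}$ and $m^{\e}_{st}$ are both positive (or both zero), since then the value at $(s,t)$ does not gate any diagonal on or off. Fortunately this does not affect your proof: like the paper, you impose both conditions as part of the hypothesis rather than deriving one from the other, so nothing downstream breaks; the hypotheses are simply not redundant.
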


\begin{example}\label{eg:off_diag*}
Let $\tiny{\M=
\begin{blockarray}{cccc}
&h_1 & h_2 & h_3  \\
\begin{block}{c (ccc)}
 d_1 & 1 & * &1\\
 d_2 & 0 &1 & 0\\
 d_3 & 1 & * &1\\ \end{block} \end{blockarray}}$ be a consistency matrix. 
Suppose that the consistency between $d_1, d_3$ and $h_2$ can not be properly measured.
With Proposition~\ref{prop:robust_off_diag}, $\CI(\M)$ can still be easily obtained:
$\tiny \overbar{\M}=\begin{pmatrix} 1 & 0 & 1\\ 0 & 1&0\\1&0&1 \end{pmatrix}$ converges to 
$\tiny \overbar{\M}^*=\begin{pmatrix} 0.5& 0 & 0.5\\ 0 & 1&0\\0.5 &0&0.5 \end{pmatrix}$ in one step of SK iteration.
So we have that $\CI(\M)=\CI(\overbar{\M})=(4\times 0.5^2+1^2)/3=2/3$.
\end{example}

Proposition~\ref{prop:robust_off_diag} is not only important for sensitivity analysis, but also practical to efficiently perform cooperative inference as mentioned in Remark~\ref{rmk:finite_step}.
For instance, if one $*$ in Example~\ref{eg:off_diag*} is positive, it takes infinite many steps of SK iteration for $\M$ to reach its limit, whereas it takes only one step for~$\overbar{\M}$. 

Proposition~\ref{prop:robust_off_diag} also implies the main theorem in \citep{YangYGWVS18} stating $\CI(\M)$ is optimal if $\M$ is a permutation of a triangular matrix. For an  $n\times n$ \textit{triangular} matrix $\M=(m_{ij})$, all the elements except $m_{i,i}$ are off diagonal.
To efficiently apply cooperative inference, one only needs to consider $\overbar{\M}=\diag(m_{11},\dots, m_{nn})$. SK iteration on $\overbar{\M}$ converges to $I_n=\diag(1,\dots, 1)$ in one step. Therefore, we have $\CI(\M)=1$.

By analogy, Corollary~\ref{cor:M_barM_SKsame} implies that CI is robust to any perturbation on elements that are off maximal partial pattern for \textit{rectangular} matrices as well.


Perturbations for \textit{on-diagonal} elements are more complicated and interesting. 
To obtain $\M^{\e}$, one may either perturb an on-diagonal element of $\M$ or perturb a zero element of $\M$ introducing a new diagonal(s) for $\M^{\e}$ .

A celebrated result in \citep{sinkhorn1972continuous} shows that $\Phi:\A\to \B$ is a continuous function:
\begin{thm}[Continuity of SK iteration]\label{thm:continous_sk}
$\Phi(\M^{\e})$ converges to $\Phi(\M)$ as $\M^{\e}\to \M$.
\end{thm}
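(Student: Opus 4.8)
\textit{Proof proposal.} The plan is to prove the statement sequentially and to identify every subsequential limit with $\Phi(\M)$ using Proposition~\ref{prop:preimage}. Since possessing a positive diagonal is an open condition, $\M^{\e}\in\A$ for small $\e$, and each $\Phi(\M^{\e})$ lies in the compact set $\B$; so it suffices to show that whenever $\e_j\to 0$ and $\Phi(\M^{\e_j})\to L\in\B$ one has $L=\Phi(\M)$, as the full limit then exists and equals $\Phi(\M)$. Now $L$ is doubly stochastic, hence $\Phi(L)=L$ (SK iteration leaves members of $\B$ unchanged), so by Proposition~\ref{prop:preimage} applied to $L\in\A$ with target $\Phi(\overbar{\M})\in\B$ it is enough to prove $L\crs\Phi(\overbar{\M})$. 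Because $\overbar{\M}\crs\Phi(\overbar{\M})$ (Proposition~\ref{prop:preimage} again), $\crs$ is an equivalence relation, and $\Phi(\M)=\Phi(\overbar{\M})$ (Corollary~\ref{cor:M_barM_SKsame}), everything reduces to the single claim $L\crs\overbar{\M}$.

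To prove $L\crs\overbar{\M}$ I would check: (i) $L\prec\overbar{\M}$; (ii) $L$ and $\overbar{\M}$ have exactly the same positive diagonals; (iii) every pair of these diagonals has the same cross ratio in $L$ as in $\overbar{\M}$. For (i): if $m_{ij}=0$ and $(i,j)\neq(s,t)$ then $m^{\e_j}_{ij}=0$, so $\Phi(\M^{\e_j})_{ij}=0$ and $L_{ij}=0$; the single position $(s,t)$ with $m_{st}=0$ is dealt with below; and since $L$ is doubly stochastic it has total support, which upgrades $L\prec\M$ to $L\prec\overbar{\M}$. For (ii), "$\subseteq$" follows from (i); for "$\supseteq$", $L$ has at least one positive diagonal (K\"onig's theorem), which lies in the pattern of $\M$ by (i) and hence is a positive diagonal of $\M$, and then $\Phi(\M^{\e_j})\crs\M^{\e_j}$ transfers a nonvanishing diagonal product from it to every positive diagonal of $\M$ in the limit. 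For (iii), for positive diagonals $\sigma,\sigma'$ of $\overbar{\M}$ one has $\CR(D^L_\sigma,D^L_{\sigma'})=\lim_j\CR(D^{\Phi(\M^{\e_j})}_\sigma,D^{\Phi(\M^{\e_j})}_{\sigma'})=\lim_j\CR(D^{\M^{\e_j}}_\sigma,D^{\M^{\e_j}}_{\sigma'})=\CR(D^{\M}_\sigma,D^{\M}_{\sigma'})=\CR(D^{\overbar{\M}}_\sigma,D^{\overbar{\M}}_{\sigma'})$, the middle equality by Proposition~\ref{prop:preimage} and the third because a one-entry perturbation scales a diagonal product by a factor tending to $1$ (or leaves it fixed when $m_{st}=0$). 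Items (i)--(iii) give $L\crs\overbar{\M}$.

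The main obstacle is the position $(s,t)$ and, dually, the danger that the positive diagonals of $\M$ all lose their mass in the limit: perturbing a zero can create a \emph{new} positive diagonal $\sigma$ of $\M^{\e_j}$ through $(s,t)$ with product $\e_j\cdot(\text{const})$, and one must rule out both that $L$ keeps positive mass on $\sigma$ and that $\prod_i\Phi(\M^{\e_j})_{i\tau(i)}\to 0$ for every positive diagonal $\tau$ of $\M$. Both follow from one rigidity computation: if $L_{st}>0$ then $(s,t)$ lies on a positive diagonal $\sigma$ of $L$ with $p:=\prod_i L_{i\sigma(i)}>0$; picking any positive diagonal $\sigma_0$ of $\M$ and using $\Phi(\M^{\e_j})\crs\M^{\e_j}$ and that a diagonal product in a doubly stochastic matrix is at most $1$, we get $\CR(D^{\Phi(\M^{\e_j})}_\sigma,D^{\Phi(\M^{\e_j})}_{\sigma_0})\geq\prod_i\Phi(\M^{\e_j})_{i\sigma(i)}\to p>0$, while this equals $\CR(D^{\M^{\e_j}}_\sigma,D^{\M^{\e_j}}_{\sigma_0})=\e_j\cdot(\text{const})\to 0$, a contradiction; running the same bookkeeping with the K\"onig diagonal of $L$ in place of $\sigma$ gives the complementary statement used in (ii). A more analytic alternative avoids cross ratios: $\Phi(\M)$ is the unique minimizer over $\B$ of the strictly convex relative entropy $P\mapsto\sum_{ij}p_{ij}\log(p_{ij}/m_{ij})$ (with $m_{ij}=0$ forcing $p_{ij}=0$), and as $\M^{\e}\to\M$ these functionals $\Gamma$-converge so their minimizers converge; but since the effective domain there depends on the zero pattern of $\M$, the cross-ratio argument is cleaner here.
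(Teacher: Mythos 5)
The paper does not prove Theorem~\ref{thm:continous_sk}; it cites it as a known result of \citet{sinkhorn1972continuous}, and no proof appears in the supplemental material. Your proposal is a self-contained argument built from the paper's own machinery, so the two routes are genuinely different. Your reduction is: (a) use compactness of the Birkhoff polytope $\B$ to pass to a subsequential limit $L$; (b) reduce $L=\Phi(\M)$ to $L\crs\overbar{\M}$ via Proposition~\ref{prop:preimage}, Corollary~\ref{cor:M_barM_SKsame}, the fact that $\Phi$ fixes $\B$, and transitivity of $\crs$; (c) verify the three parts of cross-ratio equivalence directly, with the only nontrivial point being the perturbed entry $(s,t)$ when $m_{st}=0$, which you dispose of by a rigidity estimate showing that a new diagonal created by the perturbation has cross-ratio $\e_j\cdot(\text{const})\to 0$ against any fixed positive diagonal of $\M$, so it cannot retain positive product in the limit, while the K\"onig diagonal of $L$ transfers positive mass to every positive diagonal of $\M$. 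I checked the delicate steps: the partial version of (i) used inside the obstacle paragraph does not rely on the $(s,t)$ claim, so the forward reference is not circular; $\sigma$ is indeed a positive diagonal of $\M^{\e_j}$ (since $\sigma(i)\ne t$ for $i\ne s$); $\sigma_0$ never passes through $(s,t)$; and the $\ge$ uses only that diagonal products of doubly stochastic matrices lie in $(0,1]$. The argument is correct for the paper's setting (single-entry perturbation with $\e>0$, $\M\in\A$), and it is arguably more informative than the citation, since it exhibits exactly what fails on the boundary: it is precisely the cross-ratio of a newly created diagonal that degenerates. What the citation buys is generality (Sinkhorn's result covers arbitrary perturbations and is not tied to the paper's one-entry convention and the WLOG claim that multi-entry perturbations reduce to compositions, which you silently inherit); what your argument buys is transparency and internal coherence with Section~\ref{sec:CR_eq}. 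Your final remark about the relative-entropy variational characterization is a valid alternative sketch, and you correctly flag that the domain-dependence on the zero pattern makes the $\Gamma$-convergence route less clean here.
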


Here, distance between matrices are measured by the maximum element-wise difference, e.g. $d(\M, \M^{\e})=\e$. 

This implies that small on-diagonal perturbations on a model with joint distribution $\M$, yield close solutions for cooperative inference. 
\begin{example}\label{eg:on_diag_epsilon}
Let $\tiny \M=\begin{pmatrix} 1 & 1 &0\\0& 1&1\\1&0&1 \end{pmatrix}$, $\tiny\M^{\e_1}=\begin{pmatrix} 1.5 & 1&0 \\0& 1&1\\1&0&1 \end{pmatrix}$, $\tiny\M^{\e_2}=\begin{pmatrix} 1.1 & 1&0 \\0& 1&1\\1&0&1 \end{pmatrix}$, $\tiny\M^{\e_3}=\begin{pmatrix} 1 & 1&0.1 \\0& 1&1\\1&0&1 \end{pmatrix}$ and 
$\tiny\M^{\e_4}=\begin{pmatrix} 1 & 1&0.5 \\0& 1&1\\1&0&1 \end{pmatrix}$. Apply SK iterations on $\M$ and $ \M^{\e_i}$,  we have:
$\tiny\Phi(\M)=\begin{pmatrix} 0.5 & 0.5 &0\\0& 0.5&0.5\\0.5&0&0.5 \end{pmatrix}$,
$\tiny\Phi(\M^{\e_1})=\begin{pmatrix} 0.534 & 0.466&0 \\0& 0.534&0.466\\0.466&0&0.534 \end{pmatrix}$,  
$\tiny\Phi(\M^{\e_2})=\begin{pmatrix} 0.508 & 0.492&0 \\0& 0.508&0.492\\0.492&0&0.508 \end{pmatrix}$, 
$\tiny\Phi(\M^{\e_3})=\begin{pmatrix}  0.478& 0.478& 0.044 \\0& 0.522 8&0.478\\0.522,&0&0.478 \end{pmatrix}$, 
$\tiny\Phi(\M^{\e_4})=\begin{pmatrix}  0.423& 0.423&  0.155 \\0 & 0.577& 0.423\\0.577& 0& 0.423 \end{pmatrix}$. 
It is clear that for perturbations on the same location, the variation on the limit matrix decreases as the size of the perturbation gets smaller. Moreover, perturbations of the same size cause different variations on the limits depending on whether a new diagonal is introduced. For instance, $\M^{\e_4}$ introduces a new diagonal to $\M$ whereas $\M^{\e_2}$ does not.
Although both are $0.5$ away from $\M$, 
after SK iteration
$d(\Phi(\M),\Phi(\M^{\e_2}))=0.034$ and $d(\Phi(\M),\Phi(\M^{\e_4}))=0.155$.
\end{example}

In the following example, we illustrate how one may effectively bound the variation in the limit in terms of~$\e$, even for perturbations that introduce new diagonals. However, in general, $\Phi$ is not Lipschitz \footnote{The authors would like to thank Yue Yu for pointing out counterexamples.}. 


\begin{example}\label{eg:new_diag_epsilon}
Let $\tiny \M=\begin{pmatrix} a_{11} & a_{12} &0\\0& a_{22}&a_{23}\\a_{31}&0&a_{33} \end{pmatrix}$ and  $\tiny\M^{\e}=\begin{pmatrix} a_{11} & a_{12} &\e \\0& a_{22}&a_{23}\\a_{31}&0&a_{33} \end{pmatrix}$.
While $\M$ has only two diagonals $D_1$ and $D_2$ with products of elements $d_1=a_{11}a_{22}a_{33}$ and $d_2=a_{12}a_{23}a_{31}$, 
the perturbation introduces one more diagonal $D_3$ with $d_3=\e\cdot a_{22}a_{31}$ to $\M^{\e}$. 
The Birkhoff-von Neumann theorem (Theorem~\ref{thm:BN}) guarantees that   
doubly stochastic matrices $\Phi(\M)$ and $\Phi(\M^{\e})$ can be written as \textit{convex combinations} of permutation matrices as shown below:

 $\tiny \Phi(\M)=\theta_1\begin{pmatrix}1 & 0 &0\\0& 1&0\\0&0&1 \end{pmatrix}+\theta_2\begin{pmatrix}0 & 1 &0\\0& 0&1\\1&0&0 \end{pmatrix}=\begin{pmatrix} \theta_1 & \theta_2 &0\\0& \theta_1&\theta_2\\\theta_2&0&\theta_1 \end{pmatrix} $
\begin{equation*}
    \begin{split}
     \Phi(\M^{\e})&=\alpha_1 \tiny{\begin{pmatrix} 1 & 0 &0\\0& 1&0\\0&0&1 \end{pmatrix}}+\alpha_2\begin{pmatrix}0 & 1 &0\\0& 0&1\\1&0&0 \end{pmatrix}+\alpha_3\begin{pmatrix} 0& 0 &1\\0& 1&0\\1&0&0 \end{pmatrix} \\
      &= \scriptsize{\begin{pmatrix}\alpha_1 & \alpha_2 &\alpha_3\\0& \alpha_1+\alpha_3&\alpha_2\\\alpha_2+\alpha_3&0&\alpha_1 \end{pmatrix}},\\
    \end{split}
\end{equation*}
where $\theta_1+\theta_2=1, \alpha_1+\alpha_2+\alpha_3=1$ and $\theta_i, \alpha_j>0$. 
Notice that the variation between $\Phi(\M)$ and $\Phi(\M^{\e})$ is caused by $\alpha_3$, 
we will now derive an upper bound for $\alpha_3$. 
Since $\Phi$ preserves cross ratios, evaluating a cross ratio, for example $\CR(D_3, D_1)$, in both $\Phi(\M^{\e})$ and $\M^{\e}$ 
we have that:
\begin{equation}\label{eq:cr_Me}
\footnotesize{\frac{\alpha_3(\alpha_2+\alpha_3)}{\alpha^2_1}=\frac{d_3}{d_1}=\e\cdot \frac{a_{22}a_{31}}{a_{11}a_{22}a_{33}}:=\e\cdot A_1}
\end{equation}
Since $\alpha_1+\alpha_2+\alpha_3=1$, we may assume that $\small{\alpha_1<1/2}$ 
Substituting $\small {\alpha_2+\alpha_3=1-\alpha_1}$ into Equation~\eqref{eq:cr_Me}, we get $\frac{\alpha_3(1-\alpha_1)}{\alpha^{2}_1}=\e\cdot A_1$ and this implies that:
\begin{equation*}\label{eq:alpha_3}
  \footnotesize  \alpha_3=\e\cdot A_1\cdot \frac{\alpha^{2}_1}{1-\alpha_1}\leq \e\cdot A_1 \cdot \frac{1}{2}\,, 
\end{equation*}
where the last `$\leq$' holds because $\frac{\alpha^{2}_1}{1-\alpha_1}$ reaches its maximum at $\alpha_1=\frac{1}{2}$ for $\alpha_1<\frac{1}{2}$. Thus, $\alpha_3$ is bounded above by a constant multiple of $\e$.



\end{example}


The next proposition explores how sensitive $\Phi$ is to perturbations on its images. Thus, given two doubly stochastic matrices in $\B$, we will measure the distance between their preimages under $\Phi$.  


\begin{prop}\label{prop:dist_preimage}
Let $\LLmat^1, \LLmat^2\in \B$. If $d(\LLmat^1, \LLmat^2) \leq \epsilon$, for any $\M^1\in \Pn^{-1}(\LLmat^1)$ with total support, there exist a $\M^2\in \Pn^{-1}(\LLmat^2)$ and a constant $C$ such that $d(\M^1, \M^2)\leq C\cdot \e$.
\end{prop}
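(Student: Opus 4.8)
The plan is to exploit the two structural facts already established: $\Phi$ is continuous (Theorem~\ref{thm:continous_sk}) and $\Phi^{-1}(\LLmat)$ is exactly the cross-ratio-equivalence class of $\LLmat$ inside $\A$ (Proposition~\ref{prop:preimage}). Given $\M^1 \in \Phi^{-1}(\LLmat^1)$ with total support, Sinkhorn's theorem gives diagonal matrices $X = \diag(x_1,\dots,x_n)$ and $Y = \diag(y_1,\dots,y_n)$ with $\M^1 = X\,\LLmat^1\,Y$. The natural candidate is then $\M^2 := X\,\LLmat^2\,Y$: it has the same pattern as $\LLmat^2$ (since $X,Y$ have positive diagonal entries), so it lies in $\A$, and because $\M^2$ differs from $\LLmat^2$ only by a diagonal scaling it is cross ratio equivalent to $\LLmat^2$, hence $\M^2 \in \Phi^{-1}(\LLmat^2)$ by Proposition~\ref{prop:preimage}. (A small caveat: $\LLmat^2$ may have a strictly larger pattern than $\LLmat^1$, in which case $X\LLmat^2 Y$ still has that larger pattern and the argument goes through; if $\LLmat^2$ has a strictly smaller pattern, the entries of $\M^1$ sitting over the new zeros of $\LLmat^2$ are off-diagonal perturbations and can simply be zeroed out first by Proposition~\ref{prop:robust_off_diag} without changing $\M^1 \in \Phi^{-1}(\LLmat^1)$.)

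Next I would estimate $d(\M^1,\M^2)$. Entrywise, $m^1_{ij} - m^2_{ij} = x_i\,(\ell^1_{ij} - \ell^2_{ij})\,y_j$, so
\begin{equation*}
d(\M^1,\M^2) = \max_{i,j} |x_i\,y_j|\cdot|\ell^1_{ij}-\ell^2_{ij}| \leq \Big(\max_i |x_i|\Big)\Big(\max_j |y_j|\Big)\, d(\LLmat^1,\LLmat^2) \leq \Big(\max_i |x_i|\Big)\Big(\max_j |y_j|\Big)\,\e.
\end{equation*}
Thus one may take $C = (\max_i |x_i|)(\max_j |y_j|)$, a constant depending only on $\M^1$ (equivalently, on the diagonal scaling realizing $\M^1 = X\LLmat^1 Y$), which is exactly what the statement allows. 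So the whole argument reduces to: invoke Sinkhorn to diagonalize $\M^1$ over $\LLmat^1$, transport the same scaling to $\LLmat^2$, check membership via cross ratios, and read off the Lipschitz-type bound entrywise.

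The main obstacle is making the choice of $\M^2$ legitimate as a member of $\A$ and of $\Phi^{-1}(\LLmat^2)$ in the corner cases where the supports of $\LLmat^1$ and $\LLmat^2$ differ, since the clean identity $\M^1 = X\LLmat^1Y$ only pins down $\M^1$ on the support of $\LLmat^1$. I would handle this by the reduction to $\overbar{\M^1}$ noted above (off-pattern entries are irrelevant to $\Phi$ by Corollary~\ref{cor:M_barM_SKsame}/Proposition~\ref{prop:robust_off_diag}), so without loss of generality $\M^1$ has total support with pattern equal to that of $\LLmat^1$; then I must argue that the cross-ratio conditions defining $\Phi^{-1}(\LLmat^2)$ are automatically inherited by $X\LLmat^2 Y$ — which follows from the same computation as in the proof of Proposition~\ref{prop:preimage}, since conjugating by diagonal matrices multiplies every diagonal product by the common factor $\prod_i x_i \prod_j y_j$ and hence leaves all cross ratios unchanged. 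A secondary point worth a sentence is that $C$ is genuinely allowed to depend on $\M^1$ (the scaling can blow up as $\M^1$ approaches the boundary of $\A$), consistent with the remark preceding Example~\ref{eg:new_diag_epsilon} that $\Phi$ is not globally Lipschitz.
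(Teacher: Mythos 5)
Your construction is exactly the paper's: realize $\M^1 = X\LLmat^1 Y$ via Sinkhorn's diagonal-equivalence theorem, set $\M^2 := X\LLmat^2 Y$, and bound $d(\M^1,\M^2)$ entrywise with $C = \max_{ij} x_i y_j$. The supporting observation that $\M^2 \in \Phi^{-1}(\LLmat^2)$ because diagonal scaling preserves cross ratios is also the intended mechanism.

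One minor correction to your caveat: when $\LLmat^2$ has a strictly smaller pattern than $\LLmat^1$, the positions where $\LLmat^1>0$ but $\LLmat^2=0$ are \emph{not} off-diagonal for $\M^1$ — since $\M^1$ has total support, its pattern coincides with that of $\LLmat^1$, so every positive entry of $\M^1$ is on-diagonal. Zeroing them out would therefore change $\Phi(\M^1)$, contrary to what you suggest. Fortunately this repair is unnecessary: $\M^2 = X\LLmat^2 Y$ is simply zero at those positions, and the bound $|m^1_{ij}-m^2_{ij}| = x_i|l^1_{ij}-l^2_{ij}|y_j \leq C\epsilon$ holds there without any modification. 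So the caveat should just be dropped; the core argument already covers all pattern configurations.
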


In fact, restricting to matrices with total support, $\Phi$ can be amended into a homeomorphism (see Supplemental Materials).
Viewing SK iteration as a representation selecting process,  
the homeomorphic property of $\Phi$ indicates that such process preserves important information needed to reconstruct the original model.


\section{Lower Bound for CI}\label{sec:bounds}

\textit{Cooperative Index} measures the effectiveness of the cooperative communication. However, for a given consistency matrix $\M$, in order to calculate $\CI(\M)$ one needs to obtain $\Phi(\M)$ by \textit{SK iterations}, which sometimes can be an expensive process. 
We provide bounds on $\CI(\M)$ that do not require computing SK. 

.


First, we derive a uniform bound for $\CI(\M)$ which only depends on the size of $\M$. 

\begin{prop}
For an $n \times n$ matrix $\M$, $\displaystyle \CI(\M)\geq\frac{1}{n}$ with the equality when $\M$ is uniformly distributed.
\end{prop}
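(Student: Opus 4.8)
The plan is to exploit the definition $\CI(\M) = \frac{1}{n}\,\LLmat\odot\TTmat = \frac{1}{n}\sum_{i,j} l_{ij} t_{ij}$, where $(\LLmat,\TTmat)$ is the stable limit pair of $\M$ under SK iteration. The key structural fact is that $\LLmat$ is row-normalized (each row sums to $1$) and $\TTmat$ is column-normalized (each column sums to $1$), and — crucially — by the square-matrix case of Proposition~\ref{prop:stable_format} (equivalently \citep{sinkhorn1967concerning}) the limit is a single doubly stochastic matrix $\LLmat = \TTmat = \M^*$, so $\CI(\M) = \frac{1}{n}\sum_{i,j}(m^*_{ij})^2 = \frac{1}{n}\|\M^*\|_F^2$. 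So the task reduces to showing $\|\M^*\|_F^2 \ge 1$ for any doubly stochastic $\M^*$, with equality exactly at the uniform matrix $J_n$ whose entries are all $1/n$.

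First I would bound each row: for row $i$ of $\M^*$, the entries are nonnegative and sum to $1$, so by Cauchy–Schwarz (or power-mean) $\sum_j (m^*_{ij})^2 \ge \frac{1}{n}\big(\sum_j m^*_{ij}\big)^2 = \frac{1}{n}$. Summing over the $n$ rows gives $\|\M^*\|_F^2 \ge n\cdot\frac{1}{n} = 1$, hence $\CI(\M)\ge \frac{1}{n}$. For the equality case, Cauchy–Schwarz is tight in row $i$ iff all entries of that row are equal, i.e.\ equal to $1/n$; requiring equality in every row forces $\M^* = J_n$. Finally I would check that $\M$ uniformly distributed indeed yields $\M^* = J_n$: the all-equal matrix $c\,J_n$ is already doubly stochastic after a single normalization (it is a fixed point of SK up to scaling), so $\Phi(\M) = J_n$ and $\CI(\M) = \frac{1}{n}\cdot n\cdot \frac{1}{n^2} = \frac{1}{n}$, confirming equality is attained.

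The only genuinely delicate point is making sure the reduction $\CI(\M) = \frac{1}{n}\|\M^*\|_F^2$ is legitimate — that is, invoking that for a square $\M$ (with at least one positive diagonal, so that SK converges) the stable limit is the \emph{single} doubly stochastic matrix $\M^*$ rather than a genuinely distinct pair $(\LLmat,\TTmat)$. This is exactly the classical result cited in the paper, so no new work is needed; I would just state it cleanly and then the argument is a one-line Cauchy–Schwarz estimate plus its equality analysis. If one wanted to avoid even citing that and work directly with a stable pair, one could instead note $\sum_{i,j} l_{ij}t_{ij}$ with $\LLmat$ row-stochastic and $\TTmat$ column-stochastic, and bound it using the fact (Remark after Definition~\ref{def:sk_stable}) that $\LLmat$ and $\TTmat$ share the same pattern and are diagonally equivalent via $\overbar{\M}$; but the doubly-stochastic route is cleanest and fully justified by the cited literature.
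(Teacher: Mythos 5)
Your proof is correct and essentially matches the paper's: both reduce to showing $\|\M^*\|_F^2 \ge 1$ for the doubly stochastic SK limit $\M^*$ and then apply a power-mean/Cauchy--Schwarz inequality. The only cosmetic difference is that the paper applies the Generalized Mean Inequality once to all $n^2$ entries, whereas you apply Cauchy--Schwarz row by row and sum; you also spell out the equality analysis a bit more explicitly, but the underlying argument is the same.
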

\textit{Proof.}
Let $\M^*=\left(m^*_{ij}\right)_{n\times n}$ be the limit of $\M$ under SK iteration. 
By Generalized Mean Inequality, we have $\left(\frac{\sum_{ij} m^*_{ij}}{n^2}\right)^2\leq \frac{\sum_{ij} (m^*_{ij})^2}{n^2}$. Since $\M^*$ is doubly stochastic, we have $\sum_{ij} m^*_{ij}=n$ and it follows that $\sum (m^*_{ij})^2 \geq 1$. Therefore, $\CI(\M)=\frac{\sum (m^*_{ij})^2}{n} \geq \frac{1}{n}$. $\Box$


Notice that, as the size of $\M$ increases the above bound is not effective. However, the number of positive diagonals a matrix consists can be small regardless of its size. Next, we provide another lower bound for $\CI(\M)$ that depends only on the number of positive diagonals. 

\begin{prop}\label{no.of_diag}
For an $n\times n$ matrix $\M$ with $d$ positive diagonals, $\displaystyle \CI(\M)\geq 1/d$.
\end{prop}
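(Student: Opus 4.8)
The plan is to exploit the Birkhoff--von Neumann theorem (Theorem~\ref{thm:BN}) together with the preservation of cross ratios under $\Phi$, exactly in the spirit of Example~\ref{eg:new_diag_epsilon}. Write $\M^*=\Phi(\M)$. By Corollary~\ref{cor:M_barM_SKsame} we may assume $\M$ has total support, so $\M$ and $\M^*$ have exactly the same $d$ positive diagonals, indexed by permutations $\sigma_1,\dots,\sigma_d\in S_n$. Since $\M^*$ is doubly stochastic, Birkhoff--von Neumann gives $\M^*=\sum_{k=1}^{d}\theta_k P_{\sigma_k}$ with $\theta_k\ge 0$ and $\sum_k\theta_k=1$, where $P_{\sigma_k}$ is the permutation matrix of $\sigma_k$ (the support of $\M^*$ forces only these $d$ permutation matrices to appear). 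Then each entry satisfies $m^*_{ij}=\sum_{k:\sigma_k(i)=j}\theta_k$, and I would write $\CI(\M)=\frac1n\sum_{ij}(m^*_{ij})^2$.

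The key step is to bound $\sum_{ij}(m^*_{ij})^2$ from below using the convex combination. Expanding, $\sum_{ij}(m^*_{ij})^2=\sum_{ij}\sum_{k,l}\theta_k\theta_l\,[\sigma_k(i)=j][\sigma_l(i)=j]=\sum_{k,l}\theta_k\theta_l\,|\{i:\sigma_k(i)=\sigma_l(i)\}|$. The diagonal terms $k=l$ contribute $\sum_k\theta_k^2\cdot n$, and all cross terms are non-negative, so $\sum_{ij}(m^*_{ij})^2\ge n\sum_k\theta_k^2$. Now apply the power-mean (Cauchy--Schwarz) inequality to the $\theta_k$: since $\sum_{k=1}^d\theta_k=1$, we have $\sum_{k=1}^d\theta_k^2\ge \frac1d$. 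Combining, $\CI(\M)=\frac1n\sum_{ij}(m^*_{ij})^2\ge\frac1n\cdot n\cdot\frac1d=\frac1d$.

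The only real subtlety — and the step I would be most careful about — is the claim that in the Birkhoff decomposition of $\M^*$ one can take the support of the decomposition to be exactly the $d$ permutations that are the positive diagonals of $\M$. This is where the hypothesis of total support is used: every permutation matrix $P_\sigma$ with $P_\sigma\prec\M^*$ corresponds to a positive diagonal of $\M^*$, hence of $\M$; and conversely a standard argument (repeatedly subtracting off a scaled permutation matrix supported on the current support, which by total support always stays inside the pattern) shows the decomposition can be carried out using only such permutation matrices. I would state this as a short lemma or cite the standard refinement of Birkhoff--von Neumann. Everything else is the elementary inequality bookkeeping above. Equality analysis (not strictly required by the statement) would note it forces $\theta_k=1/d$ for all $k$ and all cross terms to vanish, i.e.\ the $d$ diagonals are pairwise ``disjoint'' as graphs of permutations.
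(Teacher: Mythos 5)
Your proposal is correct and takes essentially the same route as the paper: both invoke the Birkhoff--von Neumann decomposition $\M^*=\sum_{k=1}^{d}\theta_k P_{\sigma_k}$, expand the inner product $\M^*\odot\M^*$, drop the nonnegative cross terms to obtain $\sum_k \theta_k^2\cdot n$, and finish with the power-mean inequality $\sum_k\theta_k^2\ge 1/d$. The one place you are more careful than the paper is in flagging, and sketching a justification for, the claim that the BvN decomposition can be carried out using only the $d$ permutations coming from positive diagonals of $\M$ (equivalently, at most $d$ terms appear) --- the paper simply writes $\M^*=\sum_{i=1}^{d}\theta_i P_i$ without comment, so your remark fills a small but real gap in the exposition.
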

\textit{Proof.}
Since $\M$ is a square matrix, the limit of SK iteration is a unique doubly stochastic matrix $\M^*$. 
Therefore, by Birkhoff-von Neumann theorem
$\M^*=\sum_{i=1}^{d} \theta_iP_i$ and by Definition~\ref{def:CIndex} we have: 
\begin{align*}
    \mbox{CI}(\M) &=\frac{1}{n} \M^*\odot \M^* = \frac{1}{n} \left(\sum_{i=1}^{d} \theta_iP_i\right)\odot \left(\sum_{i=1}^{d} \theta_iP_i\right)\\
    & \geq \frac{1}{n} \sum_i^d \theta_iP_i \odot \theta_iP_i
    \overset{(1)}{=}\sum_i^d  \theta_i^2 
    \overset{(2)}{\geq} \frac{1}{d}
\end{align*}
Equality~(1) holds because each $P_i$ is a permutation matrix and so $P_i\odot P_i=n$.
Inequality~(2) is obtained from Generalized Mean Inequality as $\sum_{i=1}^{d}\theta_{i}=1$. $\Box$

Such a bound makes sense because $\CI$ measures the effectiveness of the cooperative communication.
Each diagonal is a representation for communication. $\CI(\M)$ decreases as the number of diagonals increases. 
The optimal $\CI(\M)$ is achieved when $\M$ has only one diagonal, i.e.$\M$ is upper triangular up to permutation.
\begin{example}
Consider 
$\M$ in Example~\ref{eg:off_diag*}. We have $n=3, d=2$ and  
$\CI(\M)=\left(0.5^2\times 4+1\right)/3=2/3>1/2=1/d >1/3=1/n$.
\end{example}
Above example shows that when the number of diagonals is small, Proposition~\ref{no.of_diag} provides a good bound. However, counting the number of diagonals of an $n\times n$ matrix can also be computationally expensive. 
Next, we provide a much more accessible bound. 
\begin{definition}
An $n \times n$ matrix $A$ is \textbf{indecomposable} if there exists no permutation matrices $P$ and $Q$ such that $\tiny PAQ= \begin{pmatrix}
A_{11} & 0\\
A_{21} & A_{22}
\end{pmatrix}$ where, $A_{11}$ and $A_{22}$ are square submatrices.
\end{definition}

 


\begin{prop}\label{pos_ele}
For any $n \times n$ matrix $\M$, $\CI(\M)\geq \frac{1}{\eta-2n+\tau+1}$, where $\eta$ is the number of positive elements and $\tau$ is the number of indecomposable components.
\end{prop}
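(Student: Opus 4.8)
The plan is to bound the number of positive diagonals $d$ of $\M$ by the quantity $\eta - 2n + \tau + 1$ and then invoke Proposition~\ref{no.of_diag}, which gives $\CI(\M) \geq 1/d$. So the whole task reduces to the combinatorial claim that a matrix with $\eta$ positive entries, $n$ rows, and $\tau$ indecomposable components has at most $\eta - 2n + \tau + 1$ positive diagonals.

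First I would reduce to the indecomposable case. If $\M$ decomposes (up to permutation) into a block-triangular form with indecomposable diagonal blocks $A_1, \dots, A_\tau$ of sizes $n_1, \dots, n_\tau$, then any positive diagonal of $\M$ restricts to a positive diagonal in each block, and conversely; hence the number of positive diagonals of $\M$ is the \emph{product} $\prod_i d_i$ of the per-block counts (with the off-block-diagonal entries contributing nothing, since a positive diagonal can never use them). Writing $\eta_i$ for the number of positive entries used inside block $A_i$ (so $\sum_i \eta_i \le \eta$ and $\sum_i n_i = n$), it suffices to prove $d_i \le \eta_i - 2n_i + 2$ for each indecomposable block, because then $\prod_i d_i \le \prod_i(\eta_i - 2n_i + 2)$ and a short induction on $\tau$ using $xy \le x + y - 1$ for $x,y\ge 1$ (applied to the factors, each of which is $\ge 1$) collapses the product to $\sum_i(\eta_i - 2n_i + 2) - (\tau - 1) \le \eta - 2n + \tau + 1$.

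Next, for a single indecomposable $n\times n$ matrix $A$ with $\eta$ positive entries, I would prove $d \le \eta - 2n + 2$. The natural route is via the bipartite graph $G$ on $n+n$ vertices with an edge for each positive entry; positive diagonals correspond to perfect matchings. Indecomposability of $A$ is equivalent to $G$ being connected and, in fact, having every edge lie in some perfect matching (after passing to $\overbar\M$; but counting diagonals we may as well assume $A$ has total support, since zeroing off-diagonal entries changes neither $d$ nor decreases $\eta$ below the relevant count). A connected bipartite graph on $2n$ vertices with a perfect matching has $\eta \ge 2n - 1$ edges; each edge beyond a spanning tree can be added one at a time, and I would argue that adding a single edge to a graph all of whose edges lie in perfect matchings at most \emph{doubles}... no — more carefully, I would set up an induction: fix one perfect matching $M_0$, and show that the number of perfect matchings is at most $1 + (\eta - (2n-1))$, i.e. at most one more than the number of non-tree edges, by exhibiting an injection from the set of perfect matchings other than $M_0$ into the non-tree edges of $G$ (each alternating cycle through $M_0$ determines, and is determined up to the cycle, by a distinguished non-tree edge). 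This yields $d \le \eta - 2n + 2$.

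The main obstacle is exactly this last injectivity/counting step for the indecomposable case: the bound $d \le \eta - 2n + 2$ is tight (a single Hamiltonian-type cycle structure achieves it), so the argument must be sharp, and naive edge-addition bounds are too lossy. The cleanest fix is likely an induction on $\eta - n$: if $\eta = n$ then $A$ is a permutation matrix (as it is indecomposable with exactly $n$ positive entries forming total support), $d = 1 = \eta - 2n + 2$ only when $n = 1$, so one must instead induct starting from the spanning-tree-plus-one-edge base case $\eta = 2n-1$ where $G$ is a single even cycle, giving $d = 2 = \eta - 2n + 2$; then deleting a carefully chosen positive entry that keeps the graph connected-with-a-perfect-matching reduces $\eta$ by one and $d$ by at least one, closing the induction. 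Verifying that such a deletable entry always exists in an indecomposable (equivalently, total-support) matrix with $\eta > 2n - 1$ is the delicate point, and I would lean on the standard structure theory of bipartite graphs with total support (the Dulmage–Mendelsohn / ear-decomposition machinery already cited in the paper) to produce it.
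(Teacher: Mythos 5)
Your plan hinges on the combinatorial claim that the number of positive diagonals satisfies $d \le \eta - 2n + \tau + 1$, and then you would invoke Proposition~\ref{no.of_diag}. This claim is false, and the failure is not fixable by tightening the injection argument. Take $\M$ to be the $n\times n$ all-ones matrix: it is indecomposable ($\tau=1$), has $\eta = n^2$ positive entries, and has $d = n!$ positive diagonals, while $\eta - 2n + \tau + 1 = n^2 - 2n + 2$. Already at $n=4$ this gives $24 > 10$, and the gap is super-polynomial as $n$ grows. Your per-block target $d_i \le \eta_i - 2n_i + 2$ fails for the same reason, so the ``injection from perfect matchings other than $M_0$ into non-tree edges'' cannot exist: a dense bipartite graph has exponentially many perfect matchings but only $O(\eta)$ non-tree edges. (As a secondary issue, the product-collapse inequality you invoke, $xy \le x+y-1$ for $x,y\ge 1$, is also false unless $x=1$ or $y=1$, since it is equivalent to $(x-1)(y-1)\le 0$.)

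The paper takes a genuinely different route that sidesteps counting diagonals. It passes to the SK limit $\M^*$, which is doubly stochastic, and bounds the number $k$ of terms in a \emph{minimal} Birkhoff--von Neumann decomposition of $\M^*$, using a result from \citep{brualdi1982notes}: $k \le \eta^* + \tau^* - 2n + 1$, where $\eta^*,\tau^*$ are computed on $\M^*$. The computation in Proposition~\ref{no.of_diag} works verbatim for \emph{any} BvN decomposition with $k$ terms (not just the one using all positive diagonals), giving $\CI(\M^*) \ge 1/k$; combining with $\eta^*+\tau^* \le \eta+\tau$ (each off-diagonal deletion removes one positive entry and increases the component count by at most one) yields the stated bound. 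The crucial point you are missing is that $k$ can be, and typically is, far smaller than $d$---e.g., for the all-ones matrix $\M^*$ is the constant matrix $1/n$, which decomposes into just $n$ cyclic permutation matrices even though $d=n!$. Bounding $d$ is both the wrong quantity and unattainable with this statistic; bounding the minimal BvN length $k$ is the right target.
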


\textit{Proof.} Let $\M^*$ be the SK limit of $\M$ and $\eta^*$ and $\tau^*$ be the number of positive elements and the number of indecomposable components in $\M^*$, respectively. 
Then according to \citep{brualdi1982notes}, 
$\M^*$ has a Birkhoff-von Neumann decomposition with $k$ permutation matrices, where
$k \leq \eta^*+\tau^*-2n+1$. Further note that $\eta + \tau \leq \eta^*+\tau^*$.
Hence, similarly as in the proof of Proposition~\ref{no.of_diag}, 
we have that $\CI(\M)=\CI(\M^*)\geq \frac{1}{k}\geq \frac{1}{\eta^*+\tau^*-2n+1} \geq \frac{1}{\eta+\tau-2n+1}$. $\Box$

\begin{example}Consider an $n\times n$ matrix $\M$ of the form 
\begin{minipage}{0.3\linewidth}   
   \begin{tikzpicture}\tiny [baseline=(current bounding box.center)]
\matrix (m) [matrix of math nodes,nodes in empty cells,right delimiter={)},left delimiter={(} ]{
* & & &  &&  & * \\
  & & && & &  \\
  && *& *& *& &   \\
  & & *& *& *&&   \\
&& *& *&  *&&   \\
 & & && & &  \\
 * & & &  &&  & * \\
} ;
 \draw[loosely dotted] (m-1-1)-- (m-3-3);
 \draw[loosely dotted] (m-1-7)-- (m-3-5);
 \draw[loosely dotted] (m-7-1)-- (m-5-3);
 \draw[loosely dotted] (m-7-7)-- (m-5-5);
\end{tikzpicture}
  \end{minipage}
  \begin{minipage}{0.6\linewidth}
where, any * is a positive number and the rest are zeros. Notice that, it quickly becomes challenging to count $d$ when $n$ is large.
  \end{minipage}
When $n=5$, we have $\eta=13, \tau=2, d=12$, and so $\CI(\M)\geq 1/(\sigma +\tau -2n+1)=1/6>1/d$.     
\end{example}
\section{Connections to other work}\label{sec:connections}
\textbf{Geometric interpretation.} Cooperative inference is intuitive given the geometric interpretation of SK iteration, which has
been long known and favored in the study of contingency tables \citep{fienberg1970iterative, borobia1998matrix}. 
Each joint distribution matrix $\M=(m_{ij})$ of dimension $u\times v$ can be viewed as a point in the $(uv-1)$-dimension simplex,
$\mathcal{S}_{uv}=\small{\{(m_{11},\dots, m_{uv}):m_{ij}\geq 0, \sum_{ij} m_{ij}=1\}}$.
In \citep{fienberg1968geometry}, the author showed that, in $\mathcal{S}_{uv}$, positive\footnote{Every element is positive.} matrices 
with the same cross-product ratios\footnote{Two matrices with the same cross-product ratios must be cross ratio equivalent (Definition~\ref{def:cross ratios}).} form a special case of determinantal manifold $\mathcal{H}$, which is studied in \citep{room1938geometry}.
In particular, for the $2\times 2$ case, authors of \citep{fienberg1970iterative} built a homeomorphic map from $\mathcal{H}$ to the unit square and illustrated the convergence path of successive SK iterations in the unit square. 
Similarly, non-negative joint-distribution matrices with the same pattern locate on a lower-dimension face of $\mathcal{S}_{uv}$ and 
matrices with the same cross ratios form a further subspace.


\textbf{Optimal transport.} Choosing a suitable distance to compare probabilities is a key problem in statistical machine learning. When the probability space is a metric space, optimal transport distances (earth mover’s in computer vision) define a powerful geometry to compare probabilities \citep{villani2008optimal}. Optimal transport distances are a fundamental family of distances for probability
measures and histograms of features. \citep{cuturi2013sinkhorn} proposed a new family
of optimal transport distances, \textbf{Sinkhorn distance}, that look at transport problems from a maximum entropy
perspective. 
The resulting optimum is a proper distance
which can be computed through Sinkhorn’s matrix scaling. 
Let $C$ be the cost matrix, $\mathbf{r}$ and $\mathbf{c}$ be the marginal distributions for a given optimal transport problem. The matrix $\M^*$ that optimizes the Sinkhorn distance can be obtained by applying $(\mathbf{r},\mathbf{c})$-scalar SK iteration on $\M=e^{-\lambda \cdot C}$, where $\lambda$ is the regularization parameter. 
\cite{cuturi2013sinkhorn} proved that this Sinkhorn algorithm can be computed at a speed that is several orders of magnitude faster than that of transport solvers.

Optimal transport with Sinkhorn distance provides a powerful tool for domain adaptation. 
\cite{courty2015optimal} proposed the following method: first link two domains based on prior knowledge (build an initial cost matrix $C$); then learn an optimal distribution matrix $\M^*$ (w.r.t, Sinkhorn distance) from one domain to the other by applying scalar SK iteration on 
$\M=e^{-\lambda \cdot C}$. If certain transports should never happen, i.e. elements of $C$ are allowed to be $\infty$, then 
the corresponding $\M$ will be a sparse matrix. Remark~\ref{rmk:diff_scalar_SK} notes that scalar SK iteration of a sparse $\M$ may not converge and the convergence criteria can not be easily verified. 
Whereas, in the case that both domains have uniform marginal distributions, 
Theorem~\ref{main} guarantees the existence of the optimal distribution matrix $\M^*$ for any choice of the cost matrix. 
In the sparse case, the convergence rate can be further sped up by first identifying and removing off-diagonal elements, i.e. turning $\M$ into $\overbar{\M}$, then applying SK iteration on $\overbar{\M}$ as in Remark~\ref{rmk:finite_step}. 
More importantly, our results in Section~\ref{sec:cr_sn} capture the essential features of the Sinkhorn distance approach.
Proposition~\ref{prop:preimage} implies that cost matrices that are cross ratio equivalent lead to the same optimal transport. Proposition~\ref{thm:continous_sk} indicates that optimal distribution matrix $\M^*$ is continuous to the choice of regularization parameter $\lambda$, which can be used to discretize the range of~$\lambda$.

\textbf{Importance Sampling} Cooperative inference can be interpreted as selection of optimal distributions for importance sampling. 
A straightforward view is to consider Equation~\eqref{eq:LT}.
Given a joint distribution $\M$, let $\TTmat^1$ be the column normalization of $\M$. 
The $ij^{\text{th}}$-element $P_{\TTmat^1}(d_i|h_j)$ of $\TTmat^1$ can be viewed as the teacher's initial probability of selecting $d_i$ to convey $h_j$.  Once $d_i$ is observed, the learner needs to sample a concept to match $d_i$.
Assume that the learner's prior on $\conceptSpace$ is uniform. 
To minimize the variance, the learner should sample from the optimal distribution:
$P_{\LLmat^1}(h_j|d_i)=\frac{P_{\TTmat^1}(d_i|h_j)}{\sum_{j}P_{\TTmat^1}(d_i|h_j)}$.
Thus the optimal learner's matrix $\LLmat^1$ is the row normalization of $\TTmat^1$.
Similarly, based on $\LLmat^1$, to reduce variance, the teacher should sample according to the matrix $\TTmat^2$, the column normalization of $\LLmat^1$. 
This alternating process is precisely SK iteration. 
So, the solution of cooperative inference is not only the stable limit of a sequence of optimal distributions for individual $d$ and $h$, but also the only doubly stochastic matrix
cross ratio equivalent to $\M$.

A more subtle and interesting version of importance sampling is also achieved by cooperative inference.
Let $\M$ be an $n\times n$ joint distribution.
Suppose that the teacher aims to convey the whole set of $n$ concepts simultaneously.
To do so, the teacher must teach $n$ different data points at once---one data point per concept. 
This is equivalent to picking a map from $\dataSpace$ to $\conceptSpace$, i.e. a permutation $\sigma \in S_n$
as $|\dataSpace|=|\conceptSpace|=n$.
Then $P_{T}(\dataSpace_{\sigma}|\conceptSpace_T)=\Pi_{i} P_{T}(d_{\sigma(i)}|h_i)$ is the probability that the teacher picks $\sigma$ 
to teach and $P_{L}(\conceptSpace_L|\dataSpace_{\sigma})=\Pi_{i} P_{L}(h_i|d_{\sigma(i)})$ is the probability that given $\sigma$, the learner's inference completely matches the teacher's intention. 
Therefore, in order to efficiently estimate the communication accuracy, $P(\conceptSpace_L|\conceptSpace_T)=\sum_{\sigma\in S_n} P_{L}(\conceptSpace_L|\dataSpace_{\sigma})P_{T}(\dataSpace_{\sigma}|\conceptSpace_T)$,
one must sample permutations that make large positive contributions to the summation.
Such an importance sampling is attained by Cooperative inference for the following reasons.
(1) SK iteration completely removes the probability of sampling off-diagonal elements. Thus a $\sigma$
will be sampled only if it could lead to a prefect teaching.
(2) \cite{beichl1999approximating} proved that the limit of SK iteration 
maximizes entropy for doubly stochastic matrices that have the same pattern as $\M$, and
further they showed that this is the ideal property for sampling positive diagonals.



\textbf{Other connections.} See supplemental materials for pointers to other connections. 
\section{Conclusion}\label{sec:conclusion}
Cooperative inference holds promise as a theory of human-human, human-machine, and machine-machine information sharing. An impediment to realizing this promise is the lack of foundational results related to convergence, robustness, and effectiveness. We have addressed each of these limitations, including specific results showing the convergence of Cooperative Inference via SK iteration for any rectangular matrix, equivalence classes of models in terms of their cross-ratios, continuity of SK iteration which implies stability to perturbation, and several different bounds on the effectiveness of Cooperative Inference that can be derived from the original model. We also demonstrated connections and implications through geometric interpretations of Cooperative inference, optimal transport, and importance sampling. Important open questions include developing methods for modifying machine learning models to increase the efficacy and furthering our understanding of the representational implications of Cooperative Inference. 

\noindent
\textbf{Acknowledgements}

This material is based on research sponsored by the Air Force Research Laboratory and DARPA under agreement number FA8750-17-2-0146. The U.S. Government is authorized to reproduce and distribute reprints for Governmental purposes notwithstanding any copyright notation thereon. This research was also supported by NSF SMA-1640816, and NSF NSF-1549981.
\newpage
\bibliographystyle{plainnat} 
\bibliography{references}

\newpage

\appendix
\section{Supplemental Material}\label{sec:Supp}

\begin{proof}[\textbf{Proof of Proposition~\ref{prop:stable_format}}]
Let the column sums of $P$ be $\mathcal{C}=\{c_1, \dots, c_v\}$ and row sums of $Q$ be $\mathcal{R}=\{r_1, \dots, r_u\}$.
$(P,Q)$ is a SK stable pair implies that column normalization of $P$ equals $Q$, i.e. if $q_{ij}>0$ then $q_{ij}=p_{ij}/c_j$, and further row normalization of $Q$ equals $P$, i.e. if $p_{ij}>0$ then $p_{ij}=q_{ij}/r_i$.
So we have that $p_{ij}=p_{ij}/(r_i\cdot c_j)$ $\implies r_i\cdot c_j=1$ (Claim~$(*)$).
Therefore $c_j\in \mathcal{C}$ $\implies$ $1/c_j\in \mathcal{R}$ and  $r_i\in \mathcal{R}$ $\implies$ $1/r_i\in \mathcal{C}$. In particular, let $c_{max}=\max\{c_1, \dots, c_v\}$ and $r_{min}=\min\{r_1, \dots, r_u\}$. 
We have that $c_{max}=1/r_{min}$.

With permutation, we may assume that the columns with sum $c_{max}$ in $P$ are the first $v_1$ columns and
the rows with sum $r_{min}$ in $Q$ are the first $u_1$ rows. Note that an element $p_{ij}$ in the first $v_1$ columns of $P$ is positive only if it is in the first $u_1$ rows. Otherwise assume that $p_{i_0j}>0$ for $i_0>u_1$, then Claim~$(*)$ implies that $c_j\cdot r_{i_0}=1$ $\implies$ $r_{i_0}=1/c_j=1/c_{max}=r_{min}$. This contradicts to $i_0>u_1$.
Similarly, we may show that an element $q_{ij}$ in the first $u_1$ rows of $Q$ is positive only if it is in the first $v_1$ columns. Further note that $Q$ and $P$ have the same pattern. So we have that for $i\leq u_1$, $p_{ij}>0$ only if $j\leq v_1$.
Therefore let $B_1$ be the submatrix of $P$ formed by the first $u_1$ rows and first $v_1$ columns and $P_1$ ($Q_1$) be the submatrix of $P$ ($Q$) formed by the last $u-u_1$ rows and the last $v-v_1$ columns. We just showed that $P=\diag\{B_1, P_1\}$ and the column sum $c_{max}$ of $B_1$ is a constant (equals $u_1/v_1$). $(P_1, Q_1)$ is a \textit{SK stable} pair with smaller dimension. Hence, inductively, the proposition holds. 
\end{proof}


\begin{lemma}\label{limit_intermediate}
If a pair of matrices $(P, Q)$ as in Proposition \ref{EU} exists, the pattern of any pair of limit matrices $(\LLmat^\prime, \TTmat^\prime$)  is intermediate between the pattern of $(P, Q)$ and the pattern of $\M$, namely, $(P, Q) \prec (\LLmat^\prime, \TTmat^\prime) \prec \M$.
\end{lemma}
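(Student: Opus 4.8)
The plan is to prove the two partial-pattern containments $(P,Q)\prec(\LLmat',\TTmat')$ and $(\LLmat',\TTmat')\prec\M$ separately, since the statement $(P,Q)\prec(\LLmat',\TTmat')\prec\M$ is just shorthand for both holding in each coordinate of the pair. The second containment is essentially immediate: SK iteration applied to $\M$ only divides rows and columns by positive normalizing constants, so every matrix $\LLmat^k,\TTmat^k$ in the sequence has exactly the pattern of $\M$ wherever $\M$ is nonzero it stays nonzero at finite stages, and zeros stay zeros. Passing to a limit can only send positive entries to $0$, never create new ones, so $\LLmat'\prec\M$ and $\TTmat'\prec\M$. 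Thus the real content is the first containment, $(P,Q)\prec(\LLmat',\TTmat')$, i.e. that the limit cannot kill any entry that is positive in the stable subpattern $(P,Q)$.

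For that, I would track a suitable invariant along the iteration that is bounded away from $0$ on the support of $(P,Q)$. The natural candidate comes from the stable pair itself: since $P\prec\M$ and $(P,Q)$ is stable, for each entry $(i,j)$ in the pattern of $P$ one can compare the $(i,j)$ entry of $\LLmat^k$ (resp.\ $\TTmat^k$) against the corresponding entry of $P$ (resp.\ $Q$). Concretely, I would consider the ratio $\min_{(i,j)\in\mathrm{pattern}(P)} \LLmat^k_{ij}/P_{ij}$ (and the analogous quantity for columns with $Q$), and show it is non-decreasing in $k$, or at least bounded below by a positive constant depending only on $\M$ and $(P,Q)$. The key mechanism: row normalization replaces $\LLmat^k_{ij}$ by $\M_{ij}/(\sum_\ell \M_{i\ell})$-type quantities, and because $P$ is row-stochastic on its support, a convexity/averaging argument shows a row-normalization step cannot decrease the worst-case ratio to $P$; likewise for column normalization against $Q$. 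Since the sequence of these ratios is bounded below by a positive constant, any subsequential limit $\LLmat'$ has $\LLmat'_{ij}\ge cP_{ij}>0$ for every $(i,j)$ in the pattern of $P$, giving $P\prec\LLmat'$; symmetrically $Q\prec\TTmat'$.

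The main obstacle I anticipate is making the monotonicity-of-the-ratio argument rigorous: a single normalization step acts on a whole row or column at once, so one must argue that rescaling by the row sum $\sum_\ell\LLmat^k_{i\ell}$, compared with the corresponding row sum $\sum_\ell P_{i\ell}=1$, does not push the minimizing entry down — this requires controlling the row sums $\sum_\ell\LLmat^k_{i\ell}$ uniformly (they are at most the number of columns, but one needs a two-sided bound tied to $P$). An alternative route that may be cleaner is to argue by contradiction using compactness directly: suppose some $\LLmat'_{i_0j_0}=0$ with $P_{i_0j_0}>0$; then along the convergent subsequence $\LLmat^{k_m}_{i_0j_0}\to 0$, and I would derive from stability of $(P,Q)$ and the zero-preservation of SK a "leak" of mass that forces a column sum of $\TTmat'$ away from its required value, contradicting that $(\LLmat',\TTmat')$ is a limit pair of the iteration (hence itself stable by the argument already used in the proof of Proposition~\ref{EU}). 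Either way, the heart of the matter is translating the algebraic stability relation $r_i c_j = 1$ from Proposition~\ref{prop:stable_format} into a quantitative lower bound that survives the limit.
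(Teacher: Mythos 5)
Your plan correctly identifies the easy half ($\LLmat'\prec\M$, since SK preserves zeros and limits can only annihilate entries) and correctly isolates the hard half ($P\prec\LLmat'$) as needing a quantitative lower bound that survives the limit. But neither of your two proposed routes to the hard half matches what actually makes the argument go through, and I think both would stall.

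The paper's proof does not work with the entrywise ratio $\min_{(i,j)}\LLmat^k_{ij}/P_{ij}$. Instead it exploits the scaling representation $\LLmat^n=X_n\M Y_n$, $\TTmat^n=X_n\M Y_{n+1}$, and builds a single scalar potential
$f_n=\prod_i x_{in}^{1+\alpha r_i}\prod_j y_{jn}^{\alpha+c_j}$ (and its half-step analogue $g_n$), where $r_i$ are the row sums of $Q$, $c_j$ the column sums of $P$, and $\alpha$ is chosen so that $s\,r^\alpha\ge 1$ with $s=1/\prod_j c_j^{c_j}$ and $r=(v/u)^v$. Using a log-sum inequality for the column (resp.\ row) normalizing constants and AM--GM, one shows $f_1\le g_1\le f_2\le\cdots$, i.e.\ $f_n$ is monotone non-decreasing. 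Separately, $x_{in}y_{jn}=l^n_{ij}/m_{ij}\le K$ is bounded above, and the algebraic identity $f_n=\prod_{ij}(x_{in}y_{jn})^{d_{ij}}$ with $d_{ij}=p_{ij}+\alpha q_{ij}$ (here the row/column sums of $P$ and $Q$ collapse the product into $f_n$) lets one isolate any single factor: $(x_{in}y_{jn})^{d_{ij}}\ge f_1 K^{-(d-d_{ij})}$. Hence whenever $d_{ij}>0$, the product $x_{in}y_{jn}$ is bounded away from $0$ uniformly in $n$, so $l^n_{ij}$ is bounded away from $0$, giving $P\prec\LLmat'$.

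Your route (a) will not go through as stated: the minimum ratio $\min_{(i,j)}\LLmat^k_{ij}/P_{ij}$ is not monotone under a normalization step. A row normalization divides an entire row by its sum, and there is no convexity argument that prevents this from pushing the current minimizer further down while pushing other entries up; you flagged this yourself, and the obstacle is real. What the paper does is fundamentally different — it does not try to control any one entry directly, but rather a carefully weighted product of \emph{all} the scaling factors, where the weights are exactly the entries of $P$ and $Q$ and $\alpha$ is calibrated so that each normalization step moves the potential in one direction. Your route (b), the compactness/contradiction argument with a ``leak of mass,'' is too vague to evaluate; in particular, a vanishing entry in $\LLmat'$ does not obviously force a column sum of $\TTmat'$ to miss its target, because the lost mass can be redistributed among other entries in the same row and column. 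The missing ingredient in your plan is precisely the construction of the monotone potential $f_n$ with exponents $d_{ij}=p_{ij}+\alpha q_{ij}$; this is the paper's key lemma-internal idea and neither of your sketches produces a substitute for it.
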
 
\begin{proof}
Let the dimension of $\M$ be $u\times v $.
Denote the sequence of matrices generated by \textit{SK iteration} by $\{\LLmat^n, \TTmat^n\}
(n>0)$, where $\LLmat^n$ are row normalized and has column sums $\{c_{jn}\}_j$, and $\TTmat^n
$ are column normalized and has row sums $\{r_{in}\}_i$. 
As explained in \cite{pretzel1980convergence},
there exist diagonal matrices $X_n$ and $Y_n$ such that $\LLmat^n=X_n\M Y_n$ and $\TTmat^n=X_n\M Y_{n+1}$.
In particular, $X_n=\diag\{x_{1n}, \dots, x_{un}\}$ and $Y_n=\diag\{y_{1n}, \dots, y_{vn}\}$, where each 
$x_{in}$ is the product of row normalizing constants (reciprocal of row sums) of row-$i$ from step $1$ to $n$ and 
each $y_{jn}$ is the product of column normalizing constants (reciprocal of column sums) of column-$j$ from step $1$ to $n$.
Here, $Y_1$ is the identity matrix. 

Denote the row sums of $Q$ by $\{r_i\}^{u}_{i=1}$
and the column sums of $P$ by $\{c_j\}^v_{j=1}$.
Consider the following functions, we will show that they form an increasing sequence(the use of it will be clear later).
$$f_n=\prod\limits^{u}_{i=1}x_{in}^{1+\alpha r_i}\prod\limits^{v}_{j=1}y_{jn}^{\alpha +c_j},$$
$$g_n=\prod\limits^u_{i=1}x_{in}^{1+\alpha r_i}\prod\limits^v_{j=1}y_{jn+1}^{\alpha+c_j},$$ 

where, $\alpha\geq -\dfrac{\log s}{\log r},$ with $s=\frac{1}{\prod\limits_{j}c_j^{c_j}}$ and $r=\left(\frac{v}{u}\right)^v$.
\begin{align}\label{eq:fg}
\dfrac{g_n}{f_n}
=\prod\limits^v_{j=1}\left(\frac{y_{jn+1}}{y_{jn}} \right)^{\alpha +c_j}
&=\prod\limits^v_{j=1}\left(\frac{1}{c_{jn}} \right)^{\alpha +c_j}\nonumber \\
&\geq \prod\limits^v_{j=1}\left(\frac{1}{c_{jn}} \right)^{c_j}
\prod\limits^v_{j=1}\left(\frac{1}{c_{jn}} \right)^{\alpha} \,.
\end{align}

Due to Lemma 1 of \cite{berry2007algorithms} we have, $\frac{1}{\prod\limits_{j}c_{jn}^{c_j}}\geq \frac{1}{\prod\limits_{j}c_j^{c_j}}=s$, i.e. the first product of the right hand side of Inequality~\eqref{eq:fg} is greater or equal to ~$s$.

Moreover, by arithmetic and geometric means inequality, $\left(\prod\limits_{j=1}^v c_{jn}\right)^\frac{1}{v}\leq \dfrac{\sum\limits^v_{j=1} c_{jn}}{v}=\dfrac{u}{v}.$ 
Hence,
$\prod\limits^v_{j=1}\dfrac{1}{c_{jn}} =r$.
Therefore $\dfrac{g_n}{f_n}\geq s r^{\alpha}\geq 1$, where the second inequality holds because of the choice of $\alpha.$
Hence we have $g_n\geq f_n.$
The analogous argument holds for $f_{n+1}/g_n$.
So, we have $f_{n+1}\geq g_n\geq f_n$ (Claim~$*$). 

Now recall that $\LLmat^n=X_n\M Y_n$. In particular, we have $l^n_{ij}=x_{in}m_{ij}y_{jn}$, for $m_{ij}\neq0$. 
So $x_{in}y_{jn}=\frac{l^n_{ij}}{m_{ij}}$ and it is bounded above because
the elements $l^n_{ij}$ are bounded above by 1. One possible upper bound is $K=\frac{1}{ \min m_{ij}}$, where min is taken over non zero elements in $\M$. 

Moreover, let $d_{ij}= p_{ij}+\alpha \,q_{ij}$, then 
\begin{align*}
\prod \limits_{ij}\left(x_{in}y_{jn}\right)^{d_{ij}} &=
\prod \limits_{ij}\left(x_{in}y_{jn}\right)^{p_{ij}+\alpha \,q_{ij}} \\
&= \prod \limits_{i}x_{in}^{\sum_{j}p_{ij}+\alpha \,q_{ij}}
\prod \limits_{j}y_{jn}^{\sum_{i}p_{ij}+\alpha \,q_{ij}}\\
&= \prod \limits_{i}x_{in}^{1+\alpha\cdot r_i}\prod \limits_{j}y_{jn}^{\alpha+c_j}=f_n.
\end{align*}

Furthermore, if $d_{ij}\neq 0$, then $p_{ij}\neq 0 \implies m_{ij}\neq 0$ and hence $x_{in}y_{jn}\leq K$. 
Therefore $f_n\leq K^d,$ where $d=\sum\limits_{ij}d_{ij}$.
Together with Claim~$*$, we have
$\left(x_{in}y_{jn}\right)^{d_{ij}}K^{(d-d_{ij})}\geq f_n\geq f_1.$ So if $d_{ij}\neq 0$, then $x_{in}y_{jn}$ is bounded away from zero. Thus it follows that $l^n_{ij}$ is bounded away from zero for all $n$. Therefore $P\prec \LLmat^\prime$, where $\LLmat^\prime$ is the limit of a subsequence of $\LLmat^n$. 
Finally, since SK iteration perseveres zero elements, $\LLmat'\prec \M$.
Together, we have $P\prec \LLmat' \prec \M$. 
A similar argument holds for $Q$ and $\TTmat'$.
Thus the lemma holds.


\end{proof}


\begin{remark}\label{same_pattern}
Notice that, the choice of $(P, Q)$ is free within the constraints (having partial pattern of $\M$ and being \textit{SK stable}).  In particular, such matrix pairs can be partially ordered with respect to their patterns, and $(P, Q)$ can be selected such that they have the maximum possible pattern. Since the pattern of limit matrices must be intermediate between the pattern of $(P, Q)$ and the pattern of $\M$, it follows that, all the pairs of limit matrices must have the same pattern, which must be the maximum possible.  
\end{remark}

\begin{lemma}\label{lemma:diagonal_eq}
Any limit matrix of \textit{SK iteration} on $\M$
is diagonally equivalent to $\overbar {\M}$. 
\end{lemma}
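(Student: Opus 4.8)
The plan is to use the representation of the SK iterates as diagonal rescalings of $\M$ and then to manufacture a single pair of diagonal matrices witnessing the equivalence out of the limit along a convergent subsequence. Recall from the proof of Lemma~\ref{limit_intermediate} (following \cite{pretzel1980convergence}) that for a $u\times v$ matrix the row-normalized iterates satisfy $\LLmat^n=X_n\M Y_n$ with $X_n=\diag(x_{1n},\dots,x_{un})$ and $Y_n=\diag(y_{1n},\dots,y_{vn})$ positive diagonal, and likewise $\TTmat^n=X_n\M Y_{n+1}$; it is enough to handle a row-normalized limit, since a column-normalized limit is a column normalization of one such and hence still diagonally equivalent to whatever that is. Let $\LLmat'$ be the limit of a subsequence $\LLmat^{n_k}$. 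By Lemma~\ref{limit_intermediate} and Remark~\ref{same_pattern}, $\LLmat'$ has exactly the maximum partial pattern of $\M$, i.e.\ the pattern of $\overbar{\M}$, so $l'_{ij}>0$ iff $\overline m_{ij}=m_{ij}>0$.

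First I would pass to the pointwise limits of the products $x_{i,n_k}y_{j,n_k}$ on the support. For $(i,j)$ in the pattern of $\overbar{\M}$ the number $m_{ij}>0$ is a fixed constant, so $x_{i,n_k}y_{j,n_k}=\LLmat^{n_k}_{ij}/m_{ij}\to l'_{ij}/m_{ij}=:z_{ij}>0$. The key point is that even though the individual sequences $x_{i,n_k}$ and $y_{j,n_k}$ need not converge -- they are pinned down only up to a scalar on each fully indecomposable block of $\overbar{\M}$, and that scalar may drift -- the products attached to the edges of the bipartite graph of $\overbar{\M}$ do converge, and to positive limits.

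Next I would show the $z_{ij}$ factor multiplicatively. Let $G$ be the bipartite graph on the row- and column-indices with an edge for each $(i,j)$ in the pattern of $\overbar{\M}$; every vertex is non-isolated because $\overbar{\M}$ has total support and no zero row or column. Along any cycle $i_1,j_1,i_2,j_2,\dots,i_p,j_p$ of $G$ the alternating product of the $z$'s telescopes,
$$\prod_{\ell=1}^{p}\frac{z_{i_\ell j_\ell}}{z_{i_{\ell+1}j_\ell}}=\lim_{k\to\infty}\prod_{\ell=1}^{p}\frac{x_{i_\ell,n_k}y_{j_\ell,n_k}}{x_{i_{\ell+1},n_k}y_{j_\ell,n_k}}=\lim_{k\to\infty}\prod_{\ell=1}^{p}\frac{x_{i_\ell,n_k}}{x_{i_{\ell+1},n_k}}=1$$
(indices mod $p$; all the $z$'s are positive, so the limit commutes with the product). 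This cross-ratio--type condition is exactly what is needed: fixing a spanning forest of $G$, rooting each tree, and propagating values outward along the tree edges yields positive reals $x_i$, $y_j$ with $z_{ij}=x_iy_j$ on every edge of $G$, the non-tree edges being forced by the cycle condition.

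Finally, set $X=\diag(x_i)$, $Y=\diag(y_j)$. For $(i,j)$ in the pattern, $l'_{ij}=z_{ij}m_{ij}=x_i\overline m_{ij}\,y_j=(X\overbar{\M}Y)_{ij}$, and for $(i,j)$ outside it both $l'_{ij}$ and $\overline m_{ij}$ vanish; hence $\LLmat'=X\overbar{\M}Y$, so $\LLmat'$ is diagonally equivalent to $\overbar{\M}$, and the same conclusion then carries over to column-normalized limits. The main obstacle is precisely the failure of $X_n,Y_n$ to converge individually, which is what forces the detour through the edge-products $z_{ij}$ and the spanning-forest argument instead of simply taking limits in $X_n\M Y_n$; a secondary point to watch is that $\overbar{\M}$ may split as a direct sum of several fully indecomposable blocks, so $G$ can be disconnected and the factorization must be carried out component by component.
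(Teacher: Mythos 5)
Your proof is correct, and it takes a genuinely different route from the paper's. The paper's proof is essentially a reduction: observe that because $\LLmat'$ has the pattern of $\overbar{\M}$, the sequence $X_n'\overbar{\M}Y_n'$ has the same limit $\LLmat'$ as $X_n'\M Y_n'$, and then invoke Lemma~2 of \cite{pretzel1980convergence} (which directly asserts that a limit of diagonal scalings of $B$ having the same pattern as $B$ is itself a diagonal scaling of $B$). You instead re-derive that cited lemma from scratch: you pass to limits of the edge-products $z_{ij}=x_{i,n_k}y_{j,n_k}=l^{n_k}_{ij}/m_{ij}$, which converge on the support of $\overbar{\M}$ because the $l^{n_k}_{ij}$ do and $m_{ij}>0$ is fixed; you verify the telescoping cycle condition in the bipartite graph of $\overbar{\M}$; and you build $X,Y$ by propagation along a spanning forest, component by component. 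The subtlety you flag — that $X_n$ and $Y_n$ need not converge individually, only their edge-products do — is precisely the issue that forces the detour, and your handling of it is right. What you gain is a self-contained, purely combinatorial argument that makes visible why diagonal equivalence survives passage to the limit; what the paper gains by citing Pretzel is brevity. Both are sound, and both deliver the same conclusion $\LLmat'=X\overbar{\M}Y$.
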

\begin{proof}
Let $\LLmat'$ be the limit of the sequence $X_n^\prime \M Y_n^\prime$ (where the $\prime$ signifies any sub-sequence of \textit{SK iteration}). Then $\LLmat'$ is also the limit of the sequence $X_n^\prime \overbar {\M} Y_n^\prime$, 
since both $\overbar{\M}$ and $\LLmat'$ has the same pattern. 
In this case, Lemma 2 of \cite{pretzel1980convergence}
implies that there exist diagonal matrices $X, Y$ such that $\LLmat'=X\overbar{\M}Y$, i.e. $\LLmat'$ and $\overbar{\M}$ are diagonally equivalent.
\end{proof}

Proposition~1 in \cite{pretzel1980convergence} shows that: 
\begin{lemma}\label{lemma: diag_eq}
Let $A$ and $B$ be two matrices with the same row and columns sums. 
If there exists diagonal matrices $X$ and $Y$ such that $A=XBY$, then
$A=B$.
\end{lemma}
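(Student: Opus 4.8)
The plan is to prove this classical uniqueness statement for diagonal scalings by a single global summation that exploits equality of \emph{all} row sums and \emph{all} column sums at once. Write $A=(a_{ij})$ and $B=(b_{ij})$ (of common dimension $u\times v$), and $X=\diag(x_1,\dots,x_u)$, $Y=\diag(y_1,\dots,y_v)$, so that $a_{ij}=x_i\,b_{ij}\,y_j$ for all $i,j$. First I would record that the scaling factors may be taken strictly positive: if some $x_i=0$ then row $i$ of $A$ vanishes, and if some $y_j=0$ then column $j$ of $A$ vanishes, both excluded by the standing assumption that our matrices have no zero rows or zero columns. In particular $a_{ij}=0\iff b_{ij}=0$, so $A$ and $B$ share the same pattern.

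Next, for each fixed row $i$, equality of the $i$-th row sums of $A$ and $B$ gives $\sum_j b_{ij}=\sum_j a_{ij}=\sum_j x_i y_j\,b_{ij}$, hence
\[
\sum_{j} b_{ij}\,(1-x_i y_j)=0\qquad\text{for every }i,
\]
and symmetrically, equality of the $j$-th column sums yields $\sum_i b_{ij}(1-x_i y_j)=0$ for every $j$. Multiplying the $i$-th identity of the first family by $\log x_i$ and summing over $i$ gives $0$ (each outer summand is $\log x_i$ times zero); likewise summing the second family against $\log y_j$ gives $0$. Adding these two vanishing expressions and using $\log x_i+\log y_j=\log(x_i y_j)$, I obtain the single scalar identity
\[
\sum_{i,j} b_{ij}\,(1-x_i y_j)\,\log(x_i y_j)=0 .
\]

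Now I would invoke the elementary fact that $u\mapsto(1-u)\log u$ is $\le 0$ on $(0,\infty)$, with equality only at $u=1$ (for $u>1$ the two factors have opposite signs, and likewise for $u<1$). Since every $b_{ij}\ge 0$, each summand above is nonpositive, so the sum vanishing forces every summand to vanish: whenever $b_{ij}>0$ we must have $x_i y_j=1$, hence $a_{ij}=x_i b_{ij} y_j=b_{ij}$, and whenever $b_{ij}=0$ we already have $a_{ij}=0=b_{ij}$. Therefore $A=B$.

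I do not expect a serious obstacle; the only points requiring care are the reduction to strictly positive scaling factors (handled by the no-zero-row/column hypothesis, which is exactly the setting in which Lemma~\ref{lemma:diagonal_eq} supplies such an $X,Y$) and the sign analysis of $(1-u)\log u$, which isolates the equality case. The argument never uses squareness, so it applies verbatim to rectangular matrices as well. An alternative, logarithm-free route propagates the relation $x_i y_j=1$ across a connected component of the bipartite support graph by choosing $i$ with $x_i$ maximal and tracing the consequences through the incident row- and column-sum equations, but the summation argument above is the cleanest to write.
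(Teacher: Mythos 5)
Your argument is correct. The paper itself does not prove this lemma --- it simply quotes it as Proposition~1 of Pretzel's paper on the convergence of iterative scaling --- so what you have written is a genuine, self-contained replacement for that citation. The mechanism you use (pair each row-sum identity $\sum_j b_{ij}(1-x_iy_j)=0$ with the weight $\log x_i$, each column-sum identity with $\log y_j$, add, and conclude from the pointwise inequality $(1-u)\log u\le 0$ with equality only at $u=1$) is the standard entropy-style uniqueness argument for diagonal scalings, and every step checks out: the global sum $\sum_{i,j} b_{ij}(1-x_iy_j)\log(x_iy_j)$ is a sum of nonpositive terms equal to zero, forcing $x_iy_j=1$ on the support of $B$, hence $a_{ij}=b_{ij}$ there and trivially off the support. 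As you note, it never uses squareness, which matters since Proposition~\ref{EU} applies the lemma to rectangular limits. Two small points of care: your reduction to nonzero scaling factors handles $x_i=0$ and $y_j=0$ but is silent on negative entries; strictly one should observe that nonnegativity of $A$ and $B$ forces the signs of the $x_i$ and $y_j$ to be constant (up to a joint flip) on each connected component of the bipartite support graph, after which one may take all of them positive --- in the paper's actual use the $X,Y$ come from limits of positive normalizing constants, so this is automatic. Also, the hypothesis you need is only that $A$ and $B$ have equal row sums and equal column sums, which is exactly what the lemma assumes, so no circularity with doubly stochasticity arises.
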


\begin{proof} [\textbf{Proof of Proposition~\ref{prop:exist_sk_subpattern}}]
We claim that: for a given $u\times v$ matrix $\M$, one may construct a \textit{binary} matrix $A \prec \M$ such that up to permutation, $A$ is block-wise diagonal of the form $\diag(B_1, \dots, B_k)$ where each $B_i$ is a row or column vector of ones (i.e. $B_i=(1, \dots, 1)$ or $B_i=(1, \dots, 1)^T$).
Let $P,Q$ be the row and column normalizations of $A$ respectively. 
It is straightforward to check that $(P,Q)$ is \textit{SK stable} and $P\prec \M$. Therefore, we only need to prove the claim.

We will prove the claim inductively on the dimension of $\M$. 
Let $n=\max\{u,v\}$.
When $n=1$, $\M$ is an $1\times 1$ matrix and the claim holds.
Now assume that the claim holds when $n\leq k-1$, we will show that
for a $u\times v$ matrix $\M$ with $\max\{u,v\}=k$, the claim still holds.
Without loss, we may assume that $v=k$. There are now two cases.

\textit{Case 1} When $u<k$, let $\M'$ be the sub-matrix formed by the first $k-1$ columns of $\M$.
Then $\M'$ is a $u\times (k-1)$ matrix. 
$\M$ has no zero columns implies that $\M'$ has no zero columns.
(1) If $\M'$ contains no zero rows, according to the inductive assumption, there exists 
a binary matrix $A' \prec \M'$ having the desired form. 
Note that the last column of 
$\M$ contains a non-zero element $m_{tk}$. Let $\mathbf{v}=(v_1, \dots, v_u)^T$ be the column vector with $v_i=0$ if $i\neq t$ and $v_t=1$. The desire $A\prec \M$ is then constructed using $A'$ and $\mathbf{v}$ as following.
The $t$-th row of $A'$ must have a non-zero element $a'_{ts}$ (as $A'$ has no zero row). Denote the block contains $a'_{ts}$ by $B'$.
If $B'$ is a row vector with all ones, then $A$ is obtained by augmenting $A'$ by $\mathbf{v}$, i.e. $A=[A', \mathbf{v}]$. 
Otherwise, we may replace $a'_{s,t}$ in $A'$ by zero and denote the resulting matrix by $A^{''}$. 
$A$ is obtained by augmenting $A^{''}$ by $\mathbf{v}$, i.e. $A=[A^{''}, \mathbf{v}]$.  

(2) If $\M'$ contains zero rows, let $\M^*$ be the matrix obtained from $\M'$ by omitting rows with indices in $S_{zero}$ where $S_{zero}$ is the index set of zero rows. Then there exists $A^*\prec \M^*$ according to the inductive assumption. Let $A'\prec \M'$ be the matrix obtained from $A^*$ by inserting back the zero rows (at indices $S_{zero}$).
Note that $\M$ contains no zero rows implies that $m_{ik}>0$ for any $i\in S_{zero}$. Let $\mathbf{v}=(v_1, \dots, v_u)^T$ be the column vector where $v_i=1$ if $i\in S_{zero}$ and $v_i=0$ otherwise. $A$ is obtained by augmenting $A'$ by $\mathbf{v}$, i.e. $A=[A', \mathbf{v}]$.

\textit{Case 2} When $u=k$, let $\M'$ be the sub-matrix formed by the first $k-1$ rows $\M$. Then depending whether $M'$ contains zero column, one may construct $A$ as in case~1. In all circumstances, it is easy to check that the defined $A$ has the desired format by construction. Hence claim also holds for any matrix $\M$ (or its transpose) of the form $u\times k$.
\end{proof}

\begin{proof}[\textbf{Proof of Corollary~\ref{cor:M_barM_SKsame}}]
 Let $(\LLmat,\TTmat)$ and $(\overbar \LLmat,\overbar \TTmat)$ be the limit of SK iteration on $\M$ and $\overbar \M$ respectively. It is enough to show that $\LLmat=\overbar \LLmat$.
Lemma \ref{lemma:diagonal_eq} implies that both $\overbar \LLmat$ and $\LLmat$ are diagonally equivalent to $\overbar{\M}$. Therefore, $\LLmat$ is diagonally equivalent to $\overbar \LLmat$. Further since both $\LLmat$ and $\overbar \LLmat$ have the same pattern as $\overbar \M$, Proposition~\ref{prop:stable_format} shows that they have the same row and column sums. Hence, Lemma \ref{lemma: diag_eq} implies that $\LLmat=\overbar \LLmat$. 
\end{proof}

\begin{proof}[\textbf{Proof of Proposition~\ref{prop:preimage}}]
Since $\M$ and $\overbar{\M}$ have exactly the same positive diagonals, we may assume that $\M$ has total support.
Suppose that $\M \in \Phi^{-1}(\LLmat)$, i.e. $\Phi(\M)=\LLmat$. Since $\M$ has total support, \cite{sinkhorn1967concerning} implies that 
there exists diagonal matrices $X=\diag(x_1, \dots, x_n)$ and $Y=\diag(y_1, \dots, y_n)$ such that $\M=X \LLmat Y$.
In particular, $m_{ij}=x_i\times l_{ij} \times y_j$ holds, for any element $m_{ij}$. 
Let $D^{\M}_1=\{m_{i,\sigma(i)}\}$, $D^{\M}_2=\{m_{i,\sigma'(i)}\}$
be two positive diagonals of $\M$ and $D^{\LLmat}_1=\{l_{i,\sigma(i)}\}$, $D^{\LLmat}_2=\{l_{i,\sigma'(i)}\}$ be the corresponding positive diagonals in $\LLmat$. 
Then: 
\begin{equation}\label{eq:cross_ratio}
\footnotesize
 \begin{split}
CR(D^{\M}_1,D^{\M}_2)& =\frac{\Pi_{i=1}^{n}m_{i,\sigma(i)}}{\Pi_{i=1}^{n}m_{i,\sigma'(i)}}
=\frac{\Pi_{i=1}^{n}x_i\times l_{i,\sigma(i)} \times y_{\sigma(i)}}{\Pi_{i=1}^{n} x_i\times l_{i,\sigma'(i)}\times y_{\sigma'(i)}}\\
& = \frac{\Pi_{i=1}^{n}x_i\times \Pi_{i=1}^{n} l_{i,\sigma(i)} \times \Pi_{i=1}^{n} y_{\sigma(i)}}{\Pi_{i=1}^{n} x_i\times \Pi_{i=1}^{n} l_{i,\sigma'(i)}\times \Pi_{i=1}^{n} y_{\sigma'(i)}}\\
&=\frac{\Pi_{i=1}^{n}l_{i,\sigma(i)}}{\Pi_{i=1}^{n}l_{i,\sigma'(i)}}=CR(D^{\LLmat}_1,D^{\LLmat}_2)
\end{split}  
\end{equation}
We have established the `if' direction. 
Now, for the `only if' direction,  suppose that $\M\crs \LLmat$. Let $\M^*=\Phi(\M)$. Then $\M^*\in \B$ and $\M\in \Phi^{-1}(\M^*)$.
According to Equation~\eqref{eq:cross_ratio}, $\M^*\crs\M$ and so $\M^*\crs\LLmat$. Let $k=d_1^{\M^*}/d_1^{\LLmat}>0$, 
where $D_1^{\M^*}$ and $D_1^{\LLmat}$ are positive diagonals determined by the same $\sigma\in S_n$.
$\M^*\crs\LLmat$ implies that $\Pi_{i=1}^{n}m^*_{i,\alpha(i)}=k\times \Pi_{i=1}^{n}l_{i,\alpha(i)}$ for any $\alpha \in S_n$.
Note that \textit{distinct} doubly stochastic matrices do not have proportional corresponding diagonal products 
(For a proof see \cite{sinkhorn1969problems}). Hence, $\LLmat=\M^*=\Phi(\M)$.
\end{proof}

\begin{thm}[Birkhoff-von Neumann theorem](\cite{dufosse2016notes})\label{thm:BN}
For any $n\times n$ doubly stochastic matrix $A$, there exist $\theta_i \geq 0$ with $\sum_{i=1}^{k}\theta_{i}=1$ and permutation matrices $\{P_1,\dots,P_k\}$  such that $A=\sum_{i=1}^{k} \theta_iP_i$. This representation is also called Birkhoff-von Neumann (BvN) decomposition of $A$.
\end{thm}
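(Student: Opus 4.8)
\emph{Proof strategy.} The last sentence of the statement is only terminology, so the task reduces to showing that an arbitrary $n\times n$ doubly stochastic matrix $A$ is a convex combination of permutation matrices. The plan is to induct on the number $\eta$ of strictly positive entries of $A$, noting first that $\eta\ge n$ since every row of $A$ must contain a positive entry. In the base case $\eta=n$, each row has exactly one positive entry, which must equal $1$ because the row sums are $1$; the column-sum condition then forces $A$ to be a permutation matrix, and we are done with $k=1$, $\theta_1=1$.

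For the inductive step, assume $\eta>n$. First I would extract a positive diagonal of $A$. Consider the bipartite graph on the row index set $\{1,\dots,n\}$ and the column index set $\{1,\dots,n\}$ with an edge $(i,j)$ whenever $a_{ij}>0$. The crucial sub-step is to check Hall's marriage condition: for any set $S$ of rows, if $N(S)$ denotes the set of columns having a positive entry in some row of $S$, then $a_{ij}>0$ with $i\in S$ implies $j\in N(S)$, whence $|S|=\sum_{i\in S}\sum_{j=1}^n a_{ij}=\sum_{i\in S}\sum_{j\in N(S)} a_{ij}\le\sum_{j\in N(S)}\sum_{i=1}^n a_{ij}=|N(S)|$, the two outer equalities coming from the row and column sums being $1$ and the inequality from non-negativity of $A$. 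Hall's theorem (equivalently, König's theorem) then yields a $\sigma\in S_n$ with $a_{i\sigma(i)}>0$ for all $i$, i.e. a positive diagonal.

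Next I would set $\theta=\min_i a_{i\sigma(i)}$ and peel off the corresponding permutation matrix $P_\sigma$. We have $\theta>0$ since every $a_{i\sigma(i)}>0$, and $\theta<1$ since $\theta=1$ would force $A=P_\sigma$, contradicting $\eta>n$. The matrix $A'=\tfrac{1}{1-\theta}\bigl(A-\theta P_\sigma\bigr)$ is then nonnegative by the choice of $\theta$, has all row and column sums equal to $1$, and has strictly fewer positive entries than $A$: the entry attaining the minimum is sent to $0$, while no zero entry of $A$ becomes positive. Hence the induction hypothesis applies to $A'$, giving $A'=\sum_{i}\theta_i' P_i$ with $\theta_i'\ge 0$ and $\sum_i\theta_i'=1$, and then $A=\theta P_\sigma+(1-\theta)\sum_i\theta_i' P_i$ is the required convex combination, since $\theta+(1-\theta)\sum_i\theta_i'=\theta+(1-\theta)=1$.

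The only genuinely nontrivial point is the extraction of the positive diagonal, i.e. the verification of Hall's condition above, which is precisely where double stochasticity enters; the rest is bookkeeping on $\eta$. An alternative route avoids induction entirely: the set of $n\times n$ doubly stochastic matrices is a compact convex polytope in $\mathbb{R}^{n^2}$, so by the Minkowski/Krein--Milman theorem every element is a convex combination of its extreme points, and a perturbation argument along an alternating cycle of fractional entries shows that the extreme points are exactly the permutation matrices; there the cycle-finding step is the analogue of the main obstacle.
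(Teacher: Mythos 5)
Your proof is correct and complete. Note, however, that the paper does not prove this statement at all: Theorem~\ref{thm:BN} is quoted as a known classical result with the citation \cite{dufosse2016notes}, and is used as a black box elsewhere (e.g.\ in Example~\ref{eg:new_diag_epsilon} and Proposition~\ref{no.of_diag}). So there is no ``paper's own proof'' to compare against. What you have supplied is the standard textbook argument: induction on the number $\eta$ of positive entries, with the key step being the extraction of a positive diagonal via Hall's marriage theorem (the verification of Hall's condition using the row and column sums being $1$ is exactly right), followed by peeling off $\theta P_\sigma$ with $\theta=\min_i a_{i\sigma(i)}$ and renormalizing. Your checks that $0<\theta<1$ when $\eta>n$, that $A'$ is doubly stochastic, and that $A'$ has strictly fewer positive entries are all sound, and the bookkeeping showing the final coefficients sum to $1$ is correct. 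The alternative route you sketch --- Krein--Milman applied to the Birkhoff polytope plus identifying the extreme points as permutation matrices --- is also standard and equivalent in difficulty, since identifying the extreme points again reduces to finding an alternating cycle among fractional entries, which plays the same role Hall's theorem plays in the inductive argument.
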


\begin{proof}[\textbf{Proof of Proposition~\ref{prop:dist_preimage}}]
 $\M^1$ has total support implies that there exist two diagonal matrices $X=\diag\{x_1, \dots, x_n\}$ and $Y=\diag\{y_1, \dots, y_n\}$ such that $\M^1=X\LLmat^1 Y$.
Let $\M^2=X\LLmat^2 Y$ and $C=\max_{ij}\{x_iy_j\}$. Then $d(\LLmat^1, \LLmat^2) \leq \epsilon$ $\implies$ $|l^1_{ij}-l^2_{ij}|\leq \e$ $\implies$ $|x_i\cdot l^1_{ij}\cdot y_j-x_i\cdot l^2_{ij}\cdot y_j|\leq C\e$ $\implies$ $|m^1_{ij}-m^2_{ij}|\leq C\e$. Thus, $d(\M^1, \M^2)\leq C\e$.
\end{proof}

\textbf{Construction of Homeomorphic $\Phi$.}

As mentioned above, for any $\M\in \overbar{\A}$, there exist two diagonal matrices $X$ and $Y$ such that $\M=X\Phi(\M)Y$. Note the choice of $X$ and $Y$ is unique only up to a scalar. 
This can be made deterministic by requiring the last positive element of $Y$ to be $1$, i.e. $y_n=1$. In this way $\Phi$ can be viewed as a map : $\overbar{\A} \to \R^{2n-1}_{+}\times \B$ where $\M \mapsto [(x_1, \dots, x_n, y_1, \dots, y_{n-1}), \Phi(\M)]$. \cite{tverberg1976sinkhorn} showed that:

\begin{prop} 
$\Phi:\overbar{\A}\to \R^{2n-1}_{+}\times \B$ is continuous, invertible and the inverse is also continuous. Thus $\Phi$ is homeomorpic.
\end{prop}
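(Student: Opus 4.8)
The plan is to produce an explicit continuous two-sided inverse and then verify that $\Phi$ itself is continuous; together these give that $\Phi$ is a homeomorphism (this is essentially the content of \cite{tverberg1976sinkhorn}, which I would re-derive from the tools already in hand). The inverse candidate is immediate: set
\[
\Psi(x_1,\dots,x_n,y_1,\dots,y_{n-1},\LLmat)\;=\;\diag(x_1,\dots,x_n)\,\LLmat\,\diag(y_1,\dots,y_{n-1},1).
\]
This is polynomial in its arguments, hence continuous, and its image lies in $\overbar{\A}$ because multiplying by positive diagonal matrices carries positive diagonals to positive diagonals, so $\Psi(x,y,\LLmat)$ has total support whenever $\LLmat$ does. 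It therefore suffices to show that $\Phi$ is a bijection onto $\R^{2n-1}_{+}\times\B$ with inverse $\Psi$, and that $\Phi$ is continuous.

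For bijectivity I would invoke \citep{sinkhorn1967concerning}: every $\M\in\overbar{\A}$ is diagonally equivalent to a \emph{unique} doubly stochastic matrix $\M^{*}$, and the scaling pair $X,Y$ with $\M=X\M^{*}Y$ is unique up to the ambiguity $(X,Y)\mapsto(tX,t^{-1}Y)$ (more precisely, one free scalar per fully indecomposable block of $\M^{*}$; the normalization ``last positive entry of $Y$ equals $1$'' removes it when $\M^{*}$ is fully indecomposable, and in general one fixes one entry of $Y$ per block). With this normalization $\Phi$ is well defined, and $\Phi(\M_1)=\Phi(\M_2)$ forces $\M_1=X\M^{*}Y=\M_2$, so $\Phi$ is injective. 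Surjectivity: given $(x,y,\LLmat)$, the matrix $\M=\Psi(x,y,\LLmat)$ has total support, $\LLmat$ is the unique doubly stochastic matrix diagonally equivalent to it (so $\Phi(\M)=\LLmat$), and $(x,y)$ is already in normalized form, whence $\Phi(\M)=(x,y,\LLmat)$. Thus $\Phi$ is a bijection and $\Psi=\Phi^{-1}$.

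It remains to show $\Phi$ is continuous, which I would split into its two coordinate blocks. The map $\M\mapsto\M^{*}$ is continuous by Theorem~\ref{thm:continous_sk}. For the scaling block, note that since $\M$ has total support we have $\M=\overbar{\M}$, so $\M$ and $\M^{*}$ share the same pattern; hence for every $(i,j)$ with $m_{ij}>0$ one has $x_iy_j=m_{ij}/m^{*}_{ij}$, a continuous function of $(\M,\M^{*})$ on a neighborhood of any fixed $\M_0\in\overbar{\A}$ (patterns can only grow under small perturbation, so a positive entry of $\M_0$ stays positive nearby and the denominator stays bounded away from $0$). Because $\M^{*}$ is fully indecomposable, the bipartite graph of its positive entries is connected, so starting from $y_n=1$ and traversing that graph one solves successively $x_i=m_{in}/m^{*}_{in}$, then $y_j=m_{ij}/(x_im^{*}_{ij})$, and so on; each $x_i$ and $y_j$ is thereby a finite composition of the continuous maps $m_{kl}/m^{*}_{kl}$, hence continuous in $(\M,\M^{*})$ and, composing with Theorem~\ref{thm:continous_sk}, continuous in $\M$. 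This establishes continuity of $\Phi$ and finishes the argument.

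The step I expect to be the main obstacle is the last one — recovering $X$ and $Y$ individually and continuously. The factorization only pins down the products $x_iy_j$; separating them requires the connectedness of the support graph of $\M^{*}$ together with a normalization that kills the residual scalar(s), and it is precisely here that the full-indecomposability (vs.\ merely total-support) distinction matters, forcing either a restriction to fully indecomposable matrices or a block-by-block bookkeeping with one normalization per indecomposable component. A secondary technical point, easy but necessary, is the semicontinuity of patterns used to keep the rational expressions for $X$ and $Y$ well defined on a neighborhood of the base point.
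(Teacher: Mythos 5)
The paper does not prove this proposition at all; it simply cites \citet{tverberg1976sinkhorn} and moves on. So your attempt is necessarily a different route: an actual derivation, built on $\Psi(x,y,\LLmat)=\diag(x)\,\LLmat\,\diag(y,1)$ as the explicit inverse, Sinkhorn's uniqueness theorem for bijectivity, Theorem~\ref{thm:continous_sk} for continuity of the $\B$-coordinate, and a rational-expression/graph-traversal argument for continuity of the scaling coordinates. The overall strategy is sound and the pieces are the right ones.

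However, the gap you flag at the end is not a minor technicality --- it is a genuine hole, and in fact it exposes an imprecision in the proposition as the paper states it. Your traversal argument for recovering $X$ and $Y$ needs the bipartite support graph of $\M^*$ to be \emph{connected}, i.e.\ needs $\M^*$ (equivalently $\M$) to be fully indecomposable. But $\overbar{\A}$ is the set of matrices with total support, which may decompose (after permutation) into $\tau\ge 2$ fully indecomposable blocks. In that case the scaling pair $(X,Y)$ with $\M=X\M^*Y$ is determined only up to $\tau$ independent positive scalars (one per block), and the single normalization ``$y_n=1$'' removes only one of them. Consequently $\Phi$ as the paper defines it is not even a well-defined single-valued map on all of $\overbar{\A}$, $\Psi$ is not injective (distinct $(x,y)$ tuples scaled block-wise produce the same $\M$), and your surjectivity argument ``$(x,y)$ is already in normalized form, whence $\Phi(\M)=(x,y,\LLmat)$'' silently assumes that normalization pins $(x,y)$ down, which it does not. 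To make the statement and the proof correct one must either restrict $\overbar{\A}$ to fully indecomposable matrices, or replace $\R^{2n-1}_+$ by $\R^{2n-\tau}_+$ with one normalization per indecomposable component of the target pattern (which makes the codomain pattern-dependent rather than a single product space). You are right that this is the main obstacle; the proof is incomplete until one of those two repairs is carried out. A secondary, fixable point: your continuity argument quietly assumes that inside $\overbar{\A}$ the pattern near $\M_0$ cannot shrink --- true, since positive entries stay positive --- but it can \emph{grow}, adding new constraints $x_iy_j=m_{ij}/m^*_{ij}$; you should note these are automatically consistent with the tree-determined values because the exact factorization $\M=X\M^*Y$ exists, so no contradiction arises, but this deserves a sentence.
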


\textbf{Role of zeros}: Results derived in Section~\ref{sec: rectangular}-\ref{sec:cr_sn} do not depend on zero elements. The lower bounds of CI derived in Section~\ref{sec:bounds} are closely related to the amount of zeros and their locations. However, together with the sensitivity analysis, these bounds can still be used as an approximation for lower bounds of CI for matrices with very small elements (instead of exact zeros). In some models, zero elements could appear. For instance, in linguistic applications, if an utterance is not consistent with a referent, its corresponding element would be zero \citep{golland2010game}. More generally, it is an interesting question as to whether more models should assign zero probability to some possible outcomes. As noted, most probabilistic models do not currently assign zero probability to any outcomes. However, if one, for example, wants models that are explainable via examples our results show that assigning zero probability to some outcomes is a desirable feature. Beyond explainability, as far as we know, there are no principled reason for not assigning zero probability to some possible outcomes. Finally, having zeros reduces the number of positive diagonals, which is a special case of the more general problem of establishing bounds based on cross-ratio (Corollary~\ref{cor:total_support}). This is considerably more challenging and a direction for future work.

\textbf{Other connections.} Sinkhorn iteration finds its way in many other applications in variety of fields. To name a few: transportation planning to predict flow in a traffic network \citep{fienberg1970iterative}, contingency table analysis which has many uses in biology, economics etc. \citep{fienberg1970iterative}, decreasing condition numbers which is of importance in numerical analysis \citep{osborne1960pre}. Moreover, there are many algorithms implemented as generalizations of Sinkhorn matrix balancing to solve problems such as Edmonds problem \citep{gurvits2003classical}, Sudoku Solvers \citep{moon2009sinkhorn} and web page ranking algorithms \citep{knight2008sinkhorn}. More applications and a comprehensive discussion can be found in \citep{idel2016review} and references therein.

\end{document}